%% file: paper.tex
\documentclass[]{hy}
\usepackage{wrapfig}
\usepackage{tabularx}
\usepackage{textcomp}
\usepackage{stfloats}
\usepackage{placeins}
\usepackage{url}
\usepackage{verbatim}
\usepackage{titlesec}
\usepackage{tocloft}
\usepackage{adjustbox}
\usepackage{multirow}
\usepackage{pifont}
\usepackage{tikz}
\usepackage{comment}
\usepackage{amsmath,amssymb,amsthm,mathtools} % define this before the line numbering.
\usepackage{colortbl}  %彩色表格需要加载的宏包
\usepackage{color}
\usepackage{booktabs} 
\usepackage{natbib} 
\setcitestyle{square,comma,numbers,sort&compress}
\usepackage{graphicx}    % 用于插入图片
\usepackage{subcaption} 
\usepackage{multirow} % 表格多行合并
\usepackage{booktabs} % 表格更好的线条
\usepackage{subcaption} % 支持 subtable 和 subfigure
\RequirePackage{xspace}
\makeatletter
\DeclareRobustCommand\onedot{\futurelet\@let@token\@onedot}
\def\@onedot{\ifx\@let@token.\else.\null\fi\xspace}

\makeatother

\definecolor{adptorange}{RGB}{248, 205, 172}
\definecolor{cmpblue}{RGB}{189, 215, 238}
\definecolor{cmpblue}{RGB}{189, 215, 238}

\definecolor{our_red}{RGB}{232,157,160}
\definecolor{our_blue}{RGB}{136,206,230}
\definecolor{our_orange}{RGB}{246,200,168}
\definecolor{our_green}{RGB}{178,211,164}

\definecolor{attn_code0}{RGB}{247,215,200}
\definecolor{attn_code1}{RGB}{238,169,139}
\definecolor{mlp_code0}{RGB}{204,201,221}
\definecolor{mlp_code1}{RGB}{102,95,153}

\definecolor{token_blue}{RGB}{84, 120, 140}
\input{math_commands}
\makeatletter
\renewcommand{\eqref}[1]{\textup{\tagform@{\ref{#1}}}}
\makeatother
\input{requirements/table}

\usepackage{pifont}

\usepackage{pifont}       % \ding{xx}
\usepackage{bbding}       % \Checkmark,\XSolid,... (需要和pifont宏包共同使用)
\usepackage{fontawesome}
\usepackage{xspace}

\usepackage{float}
\usepackage{enumitem}

\newlength\savewidth

\newcolumntype{x}[1]{>{\centering\arraybackslash}p{#1pt}}
\newcolumntype{y}[1]{>{\raggedright\arraybackslash}p{#1pt}}
\newcolumntype{z}[1]{>{\raggedleft\arraybackslash}p{#1pt}}

\renewcommand{\paragraph}[1]{\vspace{1mm}\noindent\textbf{#1}}
\usepackage{colortbl}
\usepackage{xcolor}
\usepackage{wrapfig}

\renewcommand{\paragraph}[1]{\vspace{1.25mm}\noindent\textbf{#1}}

\usepackage{algorithm}
\usepackage{algpseudocode}
\usepackage{listings}

\definecolor{codeblue}{rgb}{0.25, 0.5, 0.5}
\definecolor{codekw}{rgb}{0.35, 0.35, 0.75}
\lstdefinestyle{Pytorch}{
    language = Python,
    backgroundcolor = \color{white},
    basicstyle = \fontsize{9pt}{8pt}\selectfont\ttfamily\bfseries,
    columns = fullflexible,
    aboveskip=1pt,
    belowskip=1pt,
    breaklines = true,
    captionpos = b,
    commentstyle = \color{codeblue},
    keywordstyle = \color{codekw},
}

\definecolor{colSubject}{HTML}{D32F2F}   % Red: 核心主体 (Core Subjects)
\definecolor{colAction}{HTML}{F57C00}    % Orange: 核心动作 (Core Actions)
\definecolor{colDetail}{HTML}{388E3C}    % Green: 物体、修饰语 (Objects & Details)
\definecolor{colSpatial}{HTML}{1976D2}   % Blue: 空间与位置 (Spatial)
\definecolor{colMood}{HTML}{7B1FA2}      % Purple: 情绪与氛围 (Mood)
\definecolor{colKnow}{HTML}{AFB42B}      % Gold: 文本与认知 (Knowledge)

\usepackage{xcolor}
\usepackage{tikz}
\usetikzlibrary{tikzmark, calc, shadows.blur, shapes.geometric, fit, positioning}
\usepackage{xparse}
\pgfdeclarelayer{background}
\pgfdeclarelayer{floatbox}
\pgfdeclarelayer{foreground}
\pgfsetlayers{background,floatbox,main,foreground}

\newcounter{hcellcount}

\NewDocumentCommand{\hctext}{m}{\csname hctext@#1\endcsname}
\NewDocumentCommand{\sethctext}{mm}{\expandafter\gdef\csname hctext@#1\endcsname{#2}}

\definecolor{scoreRed}{RGB}{200, 0, 0}
\definecolor{grayText}{RGB}{120, 120, 120}

\definecolor{green}{HTML}{009000}
\definecolor{red}{HTML}{ea4335}

\newcommand{\D}{\mathcal{D}}
\newcommand{\Y}{\mathcal{Y}}
\renewcommand{\E}{\mathbb{E}}
\renewcommand{\KL}{\mathrm{KL}}
\newcommand{\piref}{\pi_{\mathrm{ref}}}
\newcommand{\pold}{\pi_{\theta^-}}
\newcommand{\pith}{\pi_\theta}

\newcommand{\piH}{\pi_{\mathrm H}}
\newcommand{\GH}{G_{\mathrm H}}
\newcommand{\Rtil}{\widetilde{p}}
\newcommand{\clip}{\operatorname{clip}}
\newcommand{\logit}{\operatorname{logit}}

\renewcommand{\sign}{\operatorname{sgn}}

\newtheorem{proposition}{Proposition}
\newtheorem{appendixproposition}{Proposition}[section]

\definecolor{cvblue}{rgb}{0.15, 0.45, 0.68}
\title{FlowBalance: Verifier-Grounded Self-Improvement from On-Policy Reasoning Experience}

\author[1,2]{Zixun Huang\textsuperscript{*}}
\author[1]{Kishan Panaganti\textsuperscript{*\(\dagger\)}}
\author[1]{Haitao Mi}
\author[1]{Leowei Liang}

\affiliation[1]{Tencent HY LLM Frontier}
\affiliation[2]{University of Pennsylvania}

\contribution{\textsuperscript{*}Equal contribution}
\contribution{\textsuperscript{\(\dagger\)}Project lead.
\\ zixunh@wharton.upenn.edu, kpb@global.tencent.com}

\projectlinks{\faGithub\ \textbf{GitHub:} \href{https://github.com/alexhuang13/FlowBalance}{github.com/alexhuang13/FlowBalance}\\[0.05cm]
\faRss\ \textbf{Blog:} \href{https://alexhuang13.github.io/FlowBalance-Blog/}{alexhuang13.github.io/FlowBalance-Blog/}}

\input{sections/abstract}

\date{\today}

\begin{document}
\thispagestyle{firstheader}
\maketitle
\pagestyle{empty}

\input{sections/introduction}
\input{sections/data_mixture}
\input{sections/method}
\input{sections/theory}
\input{sections/theory_sign_gated}
\input{sections/experiment}
\input{sections/related_work}
\input{sections/conclusion}

\bibliographystyle{unsrt}
\bibliography{paper}

% \printbibliography[heading=bibintoc]

\clearpage
\appendix
\input{sections/appendix}
\input{sections/appendix_sign_gated}
\input{sections/appendix_experiments}

\end{document}

%% file: math_commands.tex
\usepackage{amsmath,amsfonts,bm}

\def\eqref#1{equation~\ref{#1}}
\def\1{\bm{1}}

\DeclareMathAlphabet{\mathsfit}{\encodingdefault}{\sfdefault}{m}{sl}
\SetMathAlphabet{\mathsfit}{bold}{\encodingdefault}{\sfdefault}{bx}{n}

\newcommand{\E}{\mathbb{E}}

\newcommand{\KL}{D_{\mathrm{KL}}}
\newcommand{\Var}{\mathrm{Var}}

\newcommand{\Cov}{\mathrm{Cov}}
\DeclareMathOperator{\sign}{sign}

\usepackage{amsmath,amsfonts,bm}

\def\eqref#1{equation~\ref{#1}}
\def\1{\bm{1}}
\DeclareMathAlphabet{\mathsfit}{\encodingdefault}{\sfdefault}{m}{sl}
\SetMathAlphabet{\mathsfit}{bold}{\encodingdefault}{\sfdefault}{bx}{n}

%% file: requirements/table.tex
\usepackage{multirow}
\usepackage{diagbox}
\usepackage{makecell}
\usepackage{tabularx}
\usepackage{graphicx}

\usepackage{array}
\usepackage{rotating}

\definecolor{aliceblue}{rgb}{0.94, 0.97, 1.0}
\definecolor{citecolor}{HTML}{0071BC}
\definecolor{linkcolor}{HTML}{ED1C24}
\definecolor{darkgreen}{HTML}{539165}

\makeatletter
\newcommand{\thickhline}{%
 \noalign {\ifnum 0=`}\fi \hrule height 1pt
 \futurelet \reserved@a \@xhline
}
\makeatother

%% file: sections/abstract.tex
\abstract{
\noindent A reasoning model can improve from its own on-policy experience, but this inner loop is fragile: terminal verifiers provide reliable yet sparse supervision, while dense same-model guidance can reinforce false confidence or overconcentrate learning on a narrow solution mode. We introduce \textbf{FlowBalance}, a verifier-grounded self-improvement method that learns a normalized distribution over complete responses. For each on-policy trajectory, a frozen training-time view of the same policy uses privileged context to produce token-level log-probability gains, which are aggregated into a trajectory-level self-guidance score. FlowBalance calibrates this score with the verifier-derived group advantage: guidance is retained on positive-advantage trajectories, reversed on negative-advantage trajectories, and disabled when the rollout group provides no outcome preference. The resulting energy exponentially reweights a reference policy, and profiled trajectory balance fits the normalized target with one log-partition estimate per rollout group. This realizes \emph{outcome-calibrated self-guidance via trajectory balance}, without a separate token-level imitation loss. Our analysis establishes within-group contrast preservation, a minimum-change reverse-KL characterization, monotonic verifier control of target reward, and an exact correction against false-positive self-guidance on rejected responses. On mathematical reasoning, FlowBalance improves average performance over FlowRL on both Qwen3-4B and Qwen3-8B, while also improving training speed and stability, avoiding direct OPSD's response-length collapse, and exhibiting higher correct-strategy diversity in a controlled AIME24 diagnostic.
}

%% file: sections/introduction.tex
\section{Introduction}
\label{sec:introduction}

A central promise of post-training is that a reasoning model can improve from its own experience: sample several solutions, evaluate what worked, and update the policy so that the next round of experience is better. We use \emph{self-improvement} in this operational sense---repeated policy improvement from the model's own on-policy trajectories under training-time feedback---rather than to claim a fully autonomous or closed-loop system. For long-horizon reasoning, this inner loop must learn more than a higher mean reward. It must move probability toward correct reasoning, preserve useful alternative strategies, and avoid repeatedly sharpening one locally preferred trace~\citep{guan2026reasoningoverconfidence,kujawa2025neuralalgorithmic,ren2025learningdynamics}.

Two failure modes make this loop fragile. First, reinforcement learning with verifiable rewards (RLVR) provides dependable outcome grounding, but only through sparse terminal feedback~\citep{shao2024deepseekmath,yu2025dapo}. A response may contain hundreds or thousands of tokens while the verifier supplies one final reward; group-based objectives improve comparison across responses but still attach a response-level signal to every decision. Distributional methods such as FlowRL improve how this terminal evidence is translated into probability over complete responses~\citep{zhu2025flowrl}, yet outcome-only energies cannot exploit fine-grained evidence along a sampled reasoning path.

Second, the model can reassess its own sampled trajectories using a more informed training-time view. A frozen copy of the current policy can condition on a reference solution or task feedback and score every sampled token~\citep{zhao2026opsd,hubotter2026sdpo}. This same-model, privileged-hindsight signal is dense and inexpensive: it requires neither a larger external model nor additional generated responses. However, dense self-guidance is not automatically trustworthy. Because the scoring view observes information unavailable at inference, it may favor a plausible but ultimately rejected trajectory, shorten reasoning, suppress uncertainty-driven exploration, or concentrate updates around a narrow local mode~\citep{kim2026degradation,zhu2026manyfaces}. Blindly following such confidence creates \emph{self-confirmation}: the model's own erroneous preference becomes the next update's supervision.

We therefore ask a distributional self-improvement question:
\begin{quote}
\emph{Given on-policy reasoning experience, sparse verified outcomes, and dense but imperfect self-guidance, what normalized response distribution should the next policy learn?}
\end{quote}

We introduce \textbf{FlowBalance}, a verifier-grounded self-improvement operator that answers this question through \emph{outcome-calibrated self-guidance via trajectory balance}. The current policy generates a rollout group. A verifier supplies stopped group-relative advantages. A frozen privileged-hindsight view of the same policy then scores the already sampled tokens using training-only context. FlowBalance aggregates these scores into a trajectory-level guidance gain, uses the verifier to determine its direction, and fits the resulting reference-supported Gibbs target with profiled trajectory balance. The policy is optimized only through this complete-response distribution-matching objective; there is no separate token-level imitation loss.

\begin{figure*}[t]
    \vspace{-5em}
    \centering
    \includegraphics[width=0.99\textwidth]{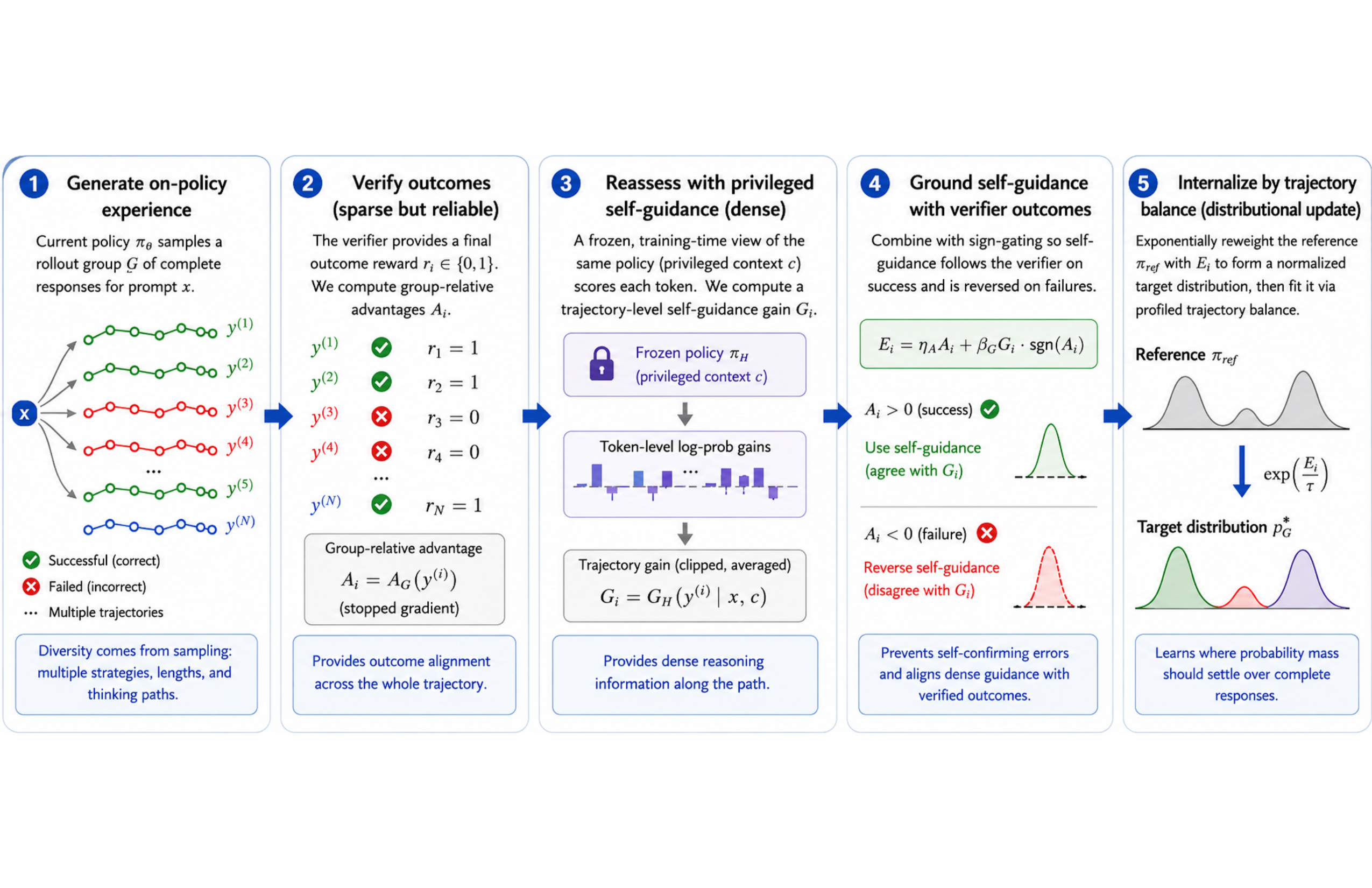}
    \vspace{-5em}
    \caption{\textbf{FlowBalance as a verifier-grounded self-improvement cycle.} The current policy generates on-policy reasoning experience (1); a verifier supplies sparse but reliable outcome feedback (2); a frozen privileged-hindsight view of the same policy supplies dense self-guidance on the sampled tokens (3); sign gating grounds that guidance in verified outcomes (4); and profiled trajectory balance internalizes the resulting normalized response distribution (5). Refreshing the frozen snapshot repeats the inner loop.}
    \label{fig:flowbalance-overview}
\end{figure*}

For a prompt $x$, let $\mathcal G=(y^{(1)},\ldots,y^{(N)})$ be the on-policy rollout group and let $A_i=A_{\mathcal G}(y^{(i)})$ be its stopped group-relative verifier advantage. The privileged-hindsight scoring view $\piH$ conditions on training-only context $c$ and yields clipped token-level log-probability gains relative to the reference policy. Their trajectory average is the self-guidance gain $\GH(y^{(i)}\mid x,c)$. FlowBalance defines
\begin{equation}
    E_{\mathrm{FlowBalance},\mathcal G}(y^{(i)}\mid x,c)
    =\eta_A A_i
     +\beta_G\GH(y^{(i)}\mid x,c)\sign(A_i).
    \label{eq:intro-energy}
\end{equation}
The verifier term anchors the direction of improvement. When $A_i>0$, positive self-guidance can reinforce a verified response; when $A_i<0$, the same positive guidance is reversed rather than allowed to self-confirm a failure; and when $A_i=0$, the dense branch is disabled. The guidance therefore refines the verifier's direction instead of overriding it.

Conditioned on the realized group, FlowBalance defines the reference-supported target
\begin{equation}
    p^\star_{\mathrm{FlowBalance},\mathcal G}(y\mid x,c)
    \propto
    \piref(y\mid x)
    \exp\!\left(
      \frac{E_{\mathrm{FlowBalance},\mathcal G}(y\mid x,c)}{\tau}
    \right),
    \label{eq:intro-target}
\end{equation}
and fits it through the profiled trajectory-balance residual
\begin{equation}
    \Delta_{\mathrm{TB}}(y;x,c,\mathcal G)
    =\tau\log Z_{\mathrm{FlowBalance},\mathcal G}(x,c)
     +\tau\log\frac{\pith(y\mid x)}{\piref(y\mid x)}
     -E_{\mathrm{FlowBalance},\mathcal G}(y\mid x,c).
    \label{eq:intro-tb}
\end{equation}
One log-partition estimate is profiled per rollout group. The reference policy controls support and drift, the verifier determines outcome direction, privileged hindsight provides dense within-trajectory evidence, and trajectory balance converts their composite energy into a normalized distribution over complete responses.

This construction can also be viewed as a \emph{distributional inner-loop update}: a task source supplies prompts, the current policy supplies experience, FlowBalance maps $\{(y_i,R_i,G_{\mathrm H,i})\}_{i=1}^N$ to a target distribution, and policy fitting produces the next policy. Our experiments deliberately hold the task distribution fixed, isolating this experience-to-policy update from outer-loop task generation or curriculum evolution. The latter is complementary rather than assumed by the method.

Our analysis studies the induced target directly. We show that profiling the partition preserves every within-group probability contrast; characterize FlowBalance as the unique minimum reverse-KL displacement from the reference at its attained composite-energy level; establish monotonic verifier control of the target reward statistic; and quantify exactly how outcome calibration converts false-positive self-guidance on a rejected response into a probability-ratio correction favoring a verified response. These are properties of the target and its local fitting objective, not a claim of global convergence for an arbitrary neural optimizer.

On mathematical reasoning, FlowBalance obtains the strongest overall averages on both Qwen3-4B and Qwen3-8B among GRPO, OPSD, RLSD, FlowRL, and FlowBalance. Relative to FlowRL, it improves the core four-benchmark average over AIME24, HMMT25, MATH500, and OlympiadBench by $1.67$ points on Qwen3-4B and $1.98$ points on Qwen3-8B, while also improving the full aggregate average. On Qwen3-8B, FlowBalance reaches $0.5$ AIME24 validation accuracy in about $100$ steps versus roughly $143$ for GRPO, remains stable over $400$ steps, and avoids the response-length collapse observed under direct OPSD. A controlled AIME24 diagnostic further finds higher correct-only semantic strategy diversity than GRPO and RLSD.

Our contributions are:
\begin{itemize}[leftmargin=1.5em,itemsep=0.9em]
    \item \textbf{A distributional self-improvement objective that improves over both verifier-only RL and RL--self-guidance hybrids.}
    GRPO improves reasoning from sparse verifier outcomes, while RLSD augments verifier-based policy optimization with dense same-model guidance. FlowBalance addresses a different question: \emph{what normalized complete-response distribution should the next policy learn from these signals?} It combines verifier advantages and privileged-hindsight guidance into one reference-supported trajectory energy, then fits the induced distribution through profiled trajectory balance. This makes the comparison to GRPO and RLSD especially informative: FlowBalance is not merely adding more supervision to GRPO or inserting another guidance term into an RL objective; it changes the policy-update object from a local optimization signal into an explicitly normalized distribution over the model's own reasoning trajectories.

    \emph{Concrete evidence.} On the core four-benchmark average over AIME24, HMMT25, MATH500, and OlympiadBench, FlowBalance improves over GRPO by $2.60$ points on Qwen3-4B ($67.69$ versus $65.10$) and $2.44$ points on Qwen3-8B ($71.09$ versus $68.65$). Relative to RLSD, the corresponding gains are $5.83$ points on Qwen3-4B ($67.69$ versus $61.87$) and $4.18$ points on Qwen3-8B ($71.09$ versus $66.92$). The same ordering holds on the full five-benchmark aggregate: FlowBalance exceeds GRPO by $1.95$ and $2.12$ points and RLSD by $4.71$ and $3.49$ points on the 4B and 8B backbones, respectively. On Qwen3-8B, it also reaches $0.5$ AIME24 validation accuracy in about $100$ steps rather than roughly $143$ for GRPO and remains near its peak over $400$ steps while GRPO degrades after about step $180$. These results isolate the practical contribution: learning a verifier-grounded trajectory distribution improves over both outcome-only RL and a direct RL--dense-guidance combination across two model scales.

    \item \textbf{Outcome-calibrated self-guidance that converts false confidence into a verifier-grounded correction.}
    The privileged-hindsight view is deliberately treated as \emph{useful but imperfect}: it can recognize promising reasoning structure, yet it can also assign positive confidence to a trajectory that fails the verifier. FlowBalance therefore uses $+\GH$ when the group-relative advantage is positive, $-\GH$ when it is negative, and zero guidance when the group provides no outcome preference. This is more than a heuristic filter. Proposition~\ref{prop:sign-gate-pairwise} shows that, for a verified success and a rejected response, sign gating multiplies their target probability ratio by the exact factor $\exp(2\beta_G\GH(y_-)/\tau)$ relative to ungated guidance. Dense same-model evidence can therefore refine verified successes without allowing a confidently scored failure to become self-reinforcing supervision.

    \emph{Concrete evidence.} In the exact four-mode diagnostic, reward-only shaping reaches success mass $0.818$, ungated self-guidance reaches $0.832$, and FlowBalance reaches $0.900$ while also increasing the robust-success mode from $0.327$ under reward-only shaping to $0.440$. In the binary false-confidence sweep at $G_-=0.5$, FlowBalance attains target success probability $0.894$, compared with $0.817$ for reward-only shaping and $0.807$ for ungated shaping. The language-model results show the same qualitative distinction: direct OPSD rapidly collapses to short responses, whereas FlowBalance maintains long reasoning traces; moreover, increasing the guidance coefficient from $\beta_G=1$ to $3$ lowers AIME24 from $89.33$ to $86.00$ and HMMT25 from $34.67$ to $30.00$. The benefit therefore comes from \emph{calibrated} self-guidance, not from simply making the dense signal stronger.

    \item \textbf{A conservative and information-efficient trajectory-balance update with exact target-level guarantees.}
    FlowBalance profiles one scalar log-partition value per rollout group, but this nuisance parameter removes only the common offset: all $N-1$ independent within-group probability contrasts remain available to train the policy. The induced target is also the unique minimum reverse-KL displacement from the reference among group distributions reaching its expected composite-energy level, and increasing the verifier coefficient monotonically increases the target's expected verifier reward for fixed guidance. Together, these results identify the update as a principled minimum-change self-improvement step that uses the full relative structure of the rollout group while keeping verified reward as an explicit control variable.

    \emph{Concrete evidence.} The exact synthetic diagnostics quantify both effects. A matched-energy full-support alternative requires reverse KL $0.973$, whereas the FlowBalance exponential tilt requires only $0.273$; the alternative is therefore $3.6\times$ farther from the reference. At rollout-group size $N=32$, profiling all contrasts yields $2.92\%$ of the local Gaussian parameter risk of a one-contrast-per-group estimator---about $1/34$ as much risk. Consistent with this conservative, all-contrast update---without claiming that the target-level results alone prove optimizer convergence---FlowBalance reaches $0.5$ AIME24 validation accuracy in about $100$ steps rather than roughly $143$ for GRPO and remains near its peak over $400$ steps, while GRPO degrades sharply after approximately step $180$.

    \item \textbf{Performance and semantic-diversity evidence that self-improvement need not collapse to one reasoning mode.}
    FlowBalance improves mathematical reasoning while preserving a broader successful response distribution. On Qwen3-4B, it improves over FlowRL on all four core reported benchmarks: AIME24, HMMT25, MATH500, and OlympiadBench. On Qwen3-8B, it obtains the best mean on every benchmark in the main table and improves the core four-benchmark average by $1.98$ points over FlowRL. These results matter for the paper's central claim because the largest gains are not confined to a single metric: they span both multi-sample competition reasoning and single-sample accuracy, while the training traces show that the policy does not obtain them by collapsing to the short-response behavior seen under direct OPSD.

    \emph{Concrete evidence.} In the controlled AIME24 diagnostic, correct-only Simpson strategy diversity is $0.2194$ for FlowBalance, compared with $0.1017$ for GRPO and $0.1456$ for RLSD. The associated traces differ in their mathematical representation, not merely their wording: FlowBalance discovers a hidden $4\!\times\!5\!\times\!8$ box embedding where GRPO uses Cayley--Menger, and the appendix shows envelope versus multiple-root reasoning, meridian-section versus implicit-normal geometry, and coordinate elimination versus secant--tangent hyperbola parameterization. Although this is a one-seed LLM-judged diagnostic rather than a population-level diversity guarantee, it supplies concrete qualitative evidence that the learned distribution places mass on genuinely different correct derivations instead of stylistic rewrites of one dominant template.
\end{itemize}

%% file: sections/data_mixture.tex
\section{Preliminaries}
\label{sec:preliminaries}

\subsection{On-Policy Reasoning Experience}

Let $x\sim\D$ be a reasoning prompt. A response is a variable-length token sequence $y=(y_1,\ldots,y_T)\in\Y(x)$. At token $t$, the state is $s_t=(x,y_{<t})$, and the trainable policy factorizes as
\begin{equation}
    \pith(y\mid x)
    =
    \prod_{t=1}^{T}\pith(y_t\mid s_t).
\end{equation}
At the start of each training iteration, we snapshot the current parameters as $\theta^-\leftarrow\theta$. The frozen rollout policy is $\pold$, while the reference policy $\piref$ is a fixed copy of the initial checkpoint.

For each prompt, we sample a group of responses $\{y^{(i)}\}_{i=1}^{N}\sim\pold(\cdot\mid x)$. The verifier assigns the terminal reward $R_i=R(y^{(i)};x)$ according to final-answer correctness. We compute the stopped group-relative advantage
\begin{equation}
    A_i
    =
    \frac{R_i-\mu_R(x)}
         {\sigma_R(x)+\epsilon},
    \label{eq:group-advantage}
\end{equation}
where $\mu_R(x)$ and $\sigma_R(x)$ are the mean and standard deviation of the rewards in the response group. Rewards, group statistics, sampled trajectories, and advantages receive no gradient.

\subsection{Privileged-Hindsight Self-Guidance}

Each training prompt $x$ is paired with training-only context $c$, such as a reference solution or task feedback. We evaluate the sampled token $y_t$ through two views of the same frozen snapshot:
\begin{align}
    \pi_{\mathrm{roll}}(\cdot\mid s_t)
    &=\pi_{\theta^-}(\cdot\mid x,y_{<t}),
    \label{eq:rollout-policy-view}\\
    \piH(\cdot\mid s_t,c)
    &=\pi_{\theta^-}(\cdot\mid x,c,y_{<t}).
    \label{eq:hindsight-policy-view}
\end{align}
The rollout view generates the response without observing $c$. The privileged-hindsight view sees $c$ only after the response has been sampled and scores the same tokens; it generates no replacement trajectory and receives no gradient. At inference, the deployed policy observes neither $c$ nor the hindsight view.

This same-model scoring construction is related to on-policy self-distillation, but its role in FlowBalance is narrower: it supplies a stopped dense feature for defining a trajectory-level target. FlowBalance does not optimize a separate token-level imitation loss.

\subsection{Reference-Supported Target Distribution}

We consider a normalized target over complete responses,
\begin{equation}
    p^\star(y\mid x,c)
    =
    \frac{1}{Z(x,c)}
    \piref(y\mid x)
    \exp\!\left(\frac{E(y\mid x,c)}{\tau}\right),
    \label{eq:generic-target}
\end{equation}
where $E(y\mid x,c)$ is a stopped trajectory energy, $\tau>0$ is its temperature, and $Z(x,c)$ is the partition function. The reference policy fixes support and controls drift; the energy specifies relative preference among complete responses. Section~\ref{sec:method} defines FlowBalance's outcome-calibrated self-guidance energy and fits the target through trajectory balance.

%% file: sections/method.tex
\section{FlowBalance: Verifier-Grounded Self-Improvement}
\label{sec:method}

FlowBalance maps a rollout group of self-generated reasoning trajectories into a normalized next-policy target. The verifier supplies sparse but grounded outcome evidence, while a privileged-hindsight view of the same frozen policy supplies dense self-guidance on the sampled tokens. FlowBalance first combines these stopped signals into an outcome-calibrated trajectory energy and then fits the induced complete-response distribution through trajectory balance. The dense branch therefore shapes \emph{which distribution is learned}; it is not optimized as a separate token-level imitation objective.

\subsection{Training-Time Self-Guidance from Privileged Hindsight}

For a sampled response $y$, define the clipped token-level hindsight gain
\begin{equation}
    \delta_t^{\mathrm H}(y;x,c)
    =
    \clip\left(
      \log\piH(y_t\mid s_t,c)
      -\log\piref(y_t\mid s_t),
      -B,B
    \right).
    \label{eq:token-guidance-score}
\end{equation}
We aggregate these gains over the complete response:
\begin{equation}
    \GH(y\mid x,c)
    =
    \frac{1}{T}
    \sum_{t=1}^{T}\delta_t^{\mathrm H}(y;x,c).
    \label{eq:trajectory-guidance-score}
\end{equation}
$\GH$ measures how much the privileged-hindsight view raises or lowers the average sampled-token log probability relative to the fixed reference policy. It is a trajectory feature computed on the policy's own on-policy experience. No token is resampled from $\piH$, and no gradient is propagated through $\piH$ or $\GH$.

\subsection{Outcome-Calibrated Self-Improvement Target}

Dense self-guidance can be useful while still being wrong about the verified outcome. FlowBalance therefore uses the group-relative advantage to determine its direction:
\begin{equation}
    E_{\mathrm{FlowBalance}}(y\mid x,c)
    =
    \eta_A A(y)
    +\beta_G\GH(y\mid x,c)\sign(A(y)),
    \label{eq:flow-energy}
\end{equation}
where $\eta_A,\beta_G\geq0$. If $A(y)>0$, positive guidance increases the trajectory energy. If $A(y)<0$, positive guidance is reversed, preventing a confidently scored failure from becoming self-reinforcing supervision. If $A(y)=0$, the dense branch is disabled. All quantities on the right-hand side are stopped during the policy update.

FlowBalance defines the unnormalized and normalized targets
\begin{align}
    \Rtil_{\mathrm{FlowBalance}}(y\mid x,c)
    &=
    \piref(y\mid x)
    \exp\!\left(\frac{E_{\mathrm{FlowBalance}}(y\mid x,c)}{\tau}\right),
    \label{eq:unnormalized-target}\\
    p^\star_{\mathrm{FlowBalance}}(y\mid x,c)
    &=
    \frac{\Rtil_{\mathrm{FlowBalance}}(y\mid x,c)}
         {Z_{\mathrm{FlowBalance}}(x,c)},
    \quad
    Z_{\mathrm{FlowBalance}}(x,c)=\sum_{y\in\Y(x)}\Rtil_{\mathrm{FlowBalance}}(y\mid x,c).
    \label{eq:normalized-target}
\end{align}
The three factors have distinct roles: $\piref$ retains reference support, $A$ supplies verified outcome direction, and $\GH$ provides dense within-trajectory evidence. Because $A$ is group-relative in implementation, the practical update uses the realized rollout group $\mathcal G$:
\begin{equation}
    p^\star_{\mathrm{FlowBalance},\mathcal G}(y^{(i)}\mid x,c)
    =
    \frac{\piref(y^{(i)}\mid x)\exp(E_{\mathrm{FlowBalance}}(y^{(i)}\mid x,c)/\tau)}
         {\sum_{j=1}^{N}\piref(y^{(j)}\mid x)\exp(E_{\mathrm{FlowBalance}}(y^{(j)}\mid x,c)/\tau)}.
    \label{eq:group-restricted-target}
\end{equation}
The response-space notation describes the corresponding ideal Gibbs distribution; the objective and profiled partition are evaluated on sampled complete trajectories.

\paragraph{Why a partition-normalized target?}
A local dense score specifies how individual sampled tokens look under privileged hindsight, but not where total probability mass should settle across complete responses. The partition term converts relative trajectory energies into a probability-conserving target. Consequently, self-guidance can refine the distribution within the verifier's direction without becoming an independent local loss. Appendix~\ref{app:detailed-balance} gives the corresponding equilibrium interpretation.

\subsection{Trajectory Balance as a Normalized Self-Update}
\label{sec:trajectory-balance}

The complete-trajectory balance equation is
\begin{equation}
    \tau\log Z_{\mathrm{FlowBalance}}(x,c)
    +\tau\log\frac{\pith(y\mid x)}{\piref(y\mid x)}
    -E_{\mathrm{FlowBalance}}(y\mid x,c)
    =0.
    \label{eq:complete-balance}
\end{equation}
The trajectory-balance residual is
\begin{equation}
    \Delta_{\mathrm{TB}}(y^{(i)};x,c)
    =
    \tau\log Z_{\mathrm{FlowBalance}}(x,c)
    +\tau\log\frac{\pith(y^{(i)}\mid x)}{\piref(y^{(i)}\mid x)}
    -E_{\mathrm{FlowBalance}}(y^{(i)}\mid x,c).
    \label{eq:tb-residual}
\end{equation}
At zero residual, the partition cancels in pairwise probability ratios:
\begin{equation}
    \log\frac{\pith(y^{(a)}\mid x)}{\pith(y^{(b)}\mid x)}
    =
    \log\frac{\piref(y^{(a)}\mid x)}{\piref(y^{(b)}\mid x)}
    +
    \frac{E_{\mathrm{FlowBalance}}(y^{(a)}\mid x,c)-E_{\mathrm{FlowBalance}}(y^{(b)}\mid x,c)}{\tau}.
    \label{eq:relative-mass}
\end{equation}
Thus the energy controls relative preference among the model's own sampled experiences, while $\log Z_{\mathrm{FlowBalance}}$ absorbs only the common prompt-level offset.

The main loss is
\begin{equation}
    \mathcal L_{\mathrm{FlowBalance}}(\theta)
    =
    \E_{(x,c)\sim\D}
    \left[
      \frac{1}{2N}\sum_{i=1}^{N}
      \Delta_{\mathrm{TB}}(y^{(i)};x,c)^2
    \right].
    \label{eq:FlowBalance-loss}
\end{equation}
Rewards, advantages, self-guidance scores, partition estimates, and sampled responses are stopped. Gradients are taken only through the trainable-policy log probabilities $\log\pith(y^{(i)}\mid x)$.

\paragraph{Subtrajectory balance.}\label{sec:subtrajectory-balance}
The same principle can be applied between intermediate states. Let $Z_{\mathrm{FlowBalance}}(s)$ denote the continuation partition from state $s$, with $Z_{\mathrm{FlowBalance}}(s_{\mathrm{term}})=1$. Define the per-token shaped increment
\begin{equation}
    r_t^{\mathrm{FlowBalance}}
    =
    \tau\log\piref(y_t\mid s_t)
    +\frac{\beta_G}{T}\delta_t^{\mathrm H}\sign(A)
    +\eta_AA\,\mathbf 1\{t=T\}.
    \label{eq:subtrajectory-increment}
\end{equation}
For an interval $i{:}j$, the subtrajectory residual is
\begin{equation}
    \Delta_{i:j}
    =
    \tau\log Z_{\mathrm{FlowBalance}}(s_i)
    +\tau\sum_{t=i}^{j-1}\log\pith(y_t\mid s_t)
    -\tau\log Z_{\mathrm{FlowBalance}}(s_j)
    -\sum_{t=i}^{j-1}r_t^{\mathrm{FlowBalance}}.
    \label{eq:subtrajectory-balance}
\end{equation}
The endpoint partitions account for different continuation distributions. Sampling intervals and minimizing $\E[\Delta_{i:j}^2]$ provides a denser fitting objective along long responses; the experiments in this paper use the complete-response implementation unless otherwise stated.

\subsection{Profiled Optimization over Rollout Groups}
\label{sec:training-algorithm}

The partition can be represented by a learned prompt-conditioned estimator or profiled directly from the rollout group. We use the group estimator. Each response implies
\begin{equation}
    \widehat{\log Z}_{i}(x,c)
    =
    \frac{E_{\mathrm{FlowBalance}}(y^{(i)}\mid x,c)}{\tau}
    -\log\frac{\pith(y^{(i)}\mid x)}{\piref(y^{(i)}\mid x)},
    \label{eq:per-trajectory-logz}
\end{equation}
and we set
\begin{equation}
    \widehat{\log Z}_{\mathrm{FlowBalance}}(x,c)
    =
    \frac{1}{N}\sum_{i=1}^{N}\widehat{\log Z}_{i}(x,c).
    \label{eq:group-logz}
\end{equation}
Gradients are stopped through this estimate.

\begin{algorithm}[t]
\caption{FlowBalance: verifier-grounded self-improvement from on-policy experience}
\label{alg:FlowBalance}
\begin{algorithmic}[1]
\Require Prompt--context dataset $\D$; policy $\pi_\theta$; fixed reference $\piref$; verifier $R$; coefficients $\eta_A,\beta_G,\tau$.
\For{each training iteration}
    \State Snapshot the current policy: $\theta^-\leftarrow\theta$.
    \State Sample a minibatch $\{(x_b,c_b)\}_{b=1}^{B}\sim\D$.
    \State Generate $N$ responses $y_b^{(i)}\sim\pi_{\theta^-}(\cdot\mid x_b)$ for every prompt.
    \State Evaluate rewards and compute stopped group-relative advantages $A_{b,i}$ using Eq.~\eqref{eq:group-advantage}.
    \State Score sampled tokens under $\pi_\theta$, $\piref$, and the frozen hindsight view $\piH=\pi_{\theta^-}(\cdot\mid x_b,c_b,y_{<t})$.
    \State Compute $\GH(y_b^{(i)}\mid x_b,c_b)$ and $E_{\mathrm{FlowBalance}}(y_b^{(i)})$ using Eqs.~\eqref{eq:trajectory-guidance-score} and~\eqref{eq:flow-energy}.
    \State Profile $\widehat{\log Z}_{\mathrm{FlowBalance}}(x_b,c_b)$ by Eq.~\eqref{eq:group-logz}.
    \State Compute $\Delta_{\mathrm{TB}}(y_b^{(i)};x_b,c_b)$ and update $\theta$ with Eq.~\eqref{eq:FlowBalance-loss}.
\EndFor
\end{algorithmic}
\end{algorithm}

Algorithm~\ref{alg:FlowBalance} makes the self-improvement loop explicit. The frozen snapshot first generates experience without privileged context, then provides a training-only hindsight score on those same trajectories. The verifier grounds the direction of the score, and trajectory balance internalizes the resulting normalized distribution into the next policy snapshot.

%% file: sections/theory.tex
\section{Why FlowBalance Supports Verifier-Grounded Self-Improvement}
\label{sec:theory}
\label{sec:sign-gated-theory}

Section~\ref{sec:method} defines FlowBalance as a partition-normalized update over a realized group of the model's own trajectories. We analyze four properties that are useful for a self-improvement inner loop. First, profiling the group partition should retain the relative evidence contained in all sampled responses. Second, the target should move conservatively from the reference. Third, the verifier should remain an explicit control on target reward even in the presence of dense self-guidance. Fourth, false-positive guidance on a rejected response should not become self-reinforcement. All statements below are conditional on the realized rollout group and treat verifier and guidance quantities as stopped. Proofs are in Appendix~\ref{app:sign-gated-proofs}.

%% file: sections/theory_sign_gated.tex
\subsection{Distributional Structure of the Self-Update}
\label{subsec:theory-target-properties}

For each sampled response $y^{(i)}$, let
\begin{equation}
    E_i
    =\eta_A A_i+\beta_G\GH(y^{(i)}\mid x,c)\sign(A_i)
\end{equation}
be its stopped FlowBalance energy. The normalized group target is
\begin{equation}
    p_i^\star
    =\frac{\piref(y^{(i)}\mid x)\exp(E_i/\tau)}{\sum_{j=1}^N\piref(y^{(j)}\mid x)\exp(E_j/\tau)}.
    \label{eq:theory-main-group-target}
\end{equation}
Let $p^\star=(p_1^\star,\ldots,p_N^\star)$. All results are conditional on the realized rollout group.

\begin{proposition}[Profiled balance uses all within-group contrasts]
\label{thm:profiled-balance}
For any candidate group distribution with positive mass on the realized responses, the profiled trajectory-balance loss is zero if and only if every relative probability contrast matches
\begin{equation}
    \frac{\pi(y^{(i)})}{\pi(y^{(j)})}
    =
    \frac{\piref(y^{(i)}\mid x)}{\piref(y^{(j)}\mid x)}
    \exp\!\left(\frac{E_i-E_j}{\tau}\right),
    \qquad i,j\in[N].
    \label{eq:main-profiled-log-ratio}
\end{equation}
In particular, whenever the group target is representable, the global minimum is zero; the unknown prompt-level normalizer removes exactly one common offset and preserves the remaining $N-1$ contrast directions.
\end{proposition}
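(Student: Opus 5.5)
Write $\ell_i=\log\pi(y^{(i)})-\log\piref(y^{(i)}\mid x)$ for a candidate group distribution $\pi$ with positive mass on the realized responses. Define $u_i = E_i/\tau-\ell_i$, which is exactly the per-trajectory estimate $\widehat{\log Z}_i$ of Eq.~\eqref{eq:per-trajectory-logz}. The plan is to rewrite the profiled loss entirely in terms of the vector $u=(u_1,\ldots,u_N)$. Once the profiled estimate \eqref{eq:group-logz}, $\bar u=\frac1N\sum_j u_j$, is substituted, the residual becomes $\Delta_{\mathrm{TB}}(y^{(i)})=\tau(\bar u-u_i)$. The per-group loss is then
\begin{equation*}
\frac{1}{2N}\sum_{i=1}^N\Delta_{\mathrm{TB}}(y^{(i)})^2=\frac{\tau^2}{2N}\sum_{i=1}^N(u_i-\bar u)^2 .
\end{equation*}
This is $\tau^2/2$ times the empirical variance of $u$. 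It is equivalently the minimum over a free scalar $\log Z$ of the unprofiled loss, because the mean is the least-squares minimizer. That fact justifies the word ``profiled.''

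The equivalence then follows in two steps. Since $\tau>0$, the loss is nonnegative, and it vanishes if and only if $u_i=\bar u$ for all $i$, i.e.\ if and only if $u_i=u_j$ for all $i,j$. Expanding $u_i=u_j$ gives $\ell_i-\ell_j=(E_i-E_j)/\tau$. Exponentiating yields \eqref{eq:main-profiled-log-ratio}; positivity of $\pi$ on the realized responses is exactly what makes the logarithms finite. The converse is the same chain read backwards.

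For the ``in particular'' clause, suppose the group target $p^\star$ of \eqref{eq:theory-main-group-target} is representable, meaning some $\pi$ restricts to it on the group up to a common normalization. Then $p_i^\star/p_j^\star$ satisfies \eqref{eq:main-profiled-log-ratio} by direct computation, because the denominator in \eqref{eq:theory-main-group-target} cancels. The loss is therefore zero, and since it is nonnegative, zero is the global minimum.

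For the dimension count, I would view the map $\ell\mapsto u-\bar u\mathbf 1$ as the orthogonal projection onto $\mathbf 1^\perp$, shifted by the fixed vector $E/\tau$. Its kernel is $\mathrm{span}(\mathbf 1)$, which is the single common offset absorbed by $\log Z$. Its range has dimension $N-1$ and is spanned by the contrasts $\ell_i-\ell_j$, so the loss constrains exactly the $N-1$ independent contrast directions.

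The main obstacle is not computational but one of stating things precisely. I must make clear that the profiled loss depends on $\pi$ only through the differences $\ell_i-\ell_j$, so that adding any constant to all $\log\pi(y^{(i)})$ leaves it unchanged. This is why matching the ratios suffices and the absolute normalization on the group is unconstrained. I also need to confirm that stopping gradients through $\widehat{\log Z}$ does not change the statement, since the statement concerns the loss value rather than its gradient.
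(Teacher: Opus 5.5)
Your proof is correct and follows essentially the same route as the paper: you profile the scalar partition at its least-squares value (the group mean of the per-trajectory estimates $u_i$), note that zero loss forces every residual to vanish, and cancel the common offset pairwise. Writing the profiled loss as $\tfrac{\tau^2}{2}$ times the empirical variance of the $u_i$ is a compact form of the paper's first-order-condition argument, and it makes the converse immediate without constructing $z$ separately. One cosmetic fix in the dimension count: with $P$ the orthogonal projection onto $\mathbf{1}^\perp$, the map is $\ell\mapsto P(E/\tau)-P\ell$, so its linear part is $-P$ and the constant shift is $P(E/\tau)$ rather than $E/\tau$; your kernel and range conclusions are unchanged.
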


The group partition therefore does not collapse the rollout group to one comparison. It absorbs only the common energy shift while retaining all relative evidence supplied by the model's own sampled experiences.

\begin{proposition}[Conservative minimum-change self-update]
\label{thm:flowbalance-razor}
Let the reference be restricted and renormalized to the realized rollout group when computing KL. For any group distribution $p\in\Delta_N$,
\begin{equation}
    \KL(p\|\piref)
    =
    \KL(p^\star\|\piref)
    +
    \KL(p\|p^\star)
    +
    \frac{1}{\tau}
    \left(
        \sum_{i=1}^N p_iE_i
        -
        \sum_{i=1}^N p_i^\star E_i
    \right).
    \label{eq:main-kl-decomposition}
\end{equation}
Consequently, among all group distributions that attain at least the expected FlowBalance energy of $p^\star$, the FlowBalance target is the unique minimum-reverse-KL displacement from the reference.
\end{proposition}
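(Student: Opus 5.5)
The plan is a direct Gibbs-variational (``Pythagorean'') computation on the finite simplex $\Delta_N$. Write $q_i=\piref(y^{(i)}\mid x)/\sum_{j}\piref(y^{(j)}\mid x)$ for the reference restricted and renormalized to the realized group. The first step is to note that the normalizing constant of $q$ cancels in Eq.~\eqref{eq:theory-main-group-target}. Hence
\[
p_i^\star=\frac{q_i\exp(E_i/\tau)}{Z_q},\qquad Z_q=\sum_{j=1}^N q_j\exp(E_j/\tau),
\]
and therefore $\log p_i^\star=\log q_i+E_i/\tau-\log Z_q$ for every $i$. Because every $q_i>0$, every $p_i^\star>0$. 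So $\KL(p\|p^\star)$ and $\KL(p\|\piref)$ are finite for every $p\in\Delta_N$, with the usual convention $0\log 0=0$ on coordinates where $p_i=0$.

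Next I will expand $\KL(p\|\piref)=\sum_i p_i\log(p_i/q_i)$ by inserting $p_i^\star$ in the log ratio:
\[
\sum_i p_i\log\frac{p_i}{q_i}=\sum_i p_i\log\frac{p_i}{p_i^\star}+\sum_i p_i\log\frac{p_i^\star}{q_i}.
\]
The first sum is $\KL(p\|p^\star)$. By the log-form of $p^\star$, the second sum equals $\frac1\tau\sum_i p_iE_i-\log Z_q$. Specializing the same identity to $p=p^\star$ gives $\KL(p^\star\|\piref)=\frac1\tau\sum_i p_i^\star E_i-\log Z_q$. Eliminating $\log Z_q$ between the two expressions yields exactly Eq.~\eqref{eq:main-kl-decomposition}.

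For the consequence, take any $p$ with $\sum_i p_iE_i\ge\sum_i p_i^\star E_i$. Then the last term of Eq.~\eqref{eq:main-kl-decomposition} is nonnegative because $\tau>0$, and $\KL(p\|p^\star)\ge 0$ by Gibbs' inequality. Together these give $\KL(p\|\piref)\ge\KL(p^\star\|\piref)$, so $p^\star$ is a minimizer over the constrained set. For uniqueness, equality forces $\KL(p\|p^\star)=0$, since both extra terms are nonnegative. This in turn forces $p=p^\star$, by strict convexity (equivalently, the equality case of Jensen's inequality for $-\log$). Finally, $p^\star$ itself is feasible, since it attains its own energy level with equality.

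There is no real analytic obstacle here. The only points needing care are bookkeeping ones. First, one must check that restricting and renormalizing the reference does not change $p^\star$, which is the cancellation of the constant noted in the first step. Second, one must handle distributions $p$ that put zero mass on some responses, which is covered by the $0\log0$ convention together with the strict positivity of $q$ and $p^\star$. I would also remark that the argument never uses the specific sign-gated form of $E_i$, so the decomposition holds for any stopped energy vector.
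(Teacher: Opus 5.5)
Your proposal is correct and takes essentially the same route as the paper's proof. Like the paper, you insert $p^\star$ into the log ratio, use $\log(p_i^\star/\piref)=E_i/\tau-\log Z$, specialize to $p=p^\star$ to eliminate the log-partition, and conclude from $\KL(p\|p^\star)\ge0$ with equality only at $p=p^\star$. Your explicit checks that renormalizing the reference leaves $p^\star$ unchanged, and that zero-mass coordinates are harmless, are small bookkeeping points the paper leaves implicit.
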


FlowBalance is therefore the smallest reference-supported distributional tilt that reaches its composite self-improvement energy level. This is a property of the target distribution, not a claim that an arbitrary finite neural-network update exactly reaches the target in one optimizer step.

\subsection{Verifier Control against Self-Confirmation}
\label{subsec:theory-component-guarantees}

\begin{proposition}[Verifier weight monotonically improves target reward]
\label{thm:verifier-monotonicity}
Hold the self-guidance scores and the realized rollout group fixed. Increasing the verifier coefficient $\eta_A$ monotonically shifts the FlowBalance target toward higher-reward responses. In particular,
\begin{equation}
    \frac{\partial}{\partial\eta_A}
    \E_{p^\star}[R]
    =
    \frac{\Var_{p^\star}(R)}
         {\tau(\sigma_R(x)+\epsilon)}
    \geq0 .
    \label{eq:main-verifier-monotonicity}
\end{equation}
\end{proposition}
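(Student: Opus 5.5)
The plan is to differentiate the group-restricted Gibbs target \eqref{eq:theory-main-group-target} directly with respect to $\eta_A$. Write $w_i=\piref(y^{(i)}\mid x)\exp(E_i/\tau)$, so that $p_i^\star=w_i/\sum_j w_j$. By assumption the realized group, the rewards $R_i$, and the guidance scores $\GH(y^{(i)}\mid x,c)$ are all held fixed. The advantages $A_i$ in \eqref{eq:group-advantage} depend only on the rewards, so they do not depend on $\eta_A$ either, and neither do the gate values $\sign(A_i)$. Hence $\partial E_i/\partial\eta_A=A_i$.

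The first step is the standard exponential-family identity. Because $\partial\log w_i/\partial\eta_A=A_i/\tau$, differentiating the softmax gives
\[
\frac{\partial p_i^\star}{\partial\eta_A}=\frac{1}{\tau}\,p_i^\star\Bigl(A_i-\sum_j p_j^\star A_j\Bigr).
\]
Summing against $R_i$ then yields $\partial_{\eta_A}\E_{p^\star}[R]=\tau^{-1}\Cov_{p^\star}(R,A)$.

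The second step uses the fact that $A$ is an affine function of $R$ with a constant slope. Indeed $A_i=(R_i-\mu_R(x))/(\sigma_R(x)+\epsilon)$, and the group statistics $\mu_R,\sigma_R$ are fixed by the realized rewards, so they are independent of $\eta_A$. Therefore
\[
\Cov_{p^\star}(R,A)=\frac{\Var_{p^\star}(R)}{\sigma_R(x)+\epsilon},
\]
which gives \eqref{eq:main-verifier-monotonicity}. Nonnegativity follows because a variance is nonnegative and $\sigma_R(x)+\epsilon>0$ (with $\epsilon>0$, or $\sigma_R>0$ otherwise). Equality holds exactly when $R$ is $p^\star$-a.s. constant, which is the degenerate group with no outcome preference.

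The main subtlety is to justify that the gated term contributes no $\eta_A$-derivative. This requires stating clearly that $\sign(A_i)$ depends only on the stopped rewards, not on $\eta_A$, so the map $\eta_A\mapsto E_i$ is affine and the softmax is smooth in $\eta_A$. Once that is settled, the computation is routine.

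For the phrase ``shifts toward higher-reward responses'' I would add a short corollary. For any two responses, $\partial_{\eta_A}\log(p_i^\star/p_j^\star)=(A_i-A_j)/\tau$, which has the sign of $R_i-R_j$. So every pairwise ratio moves in favor of the higher-reward response, and integrating the derivative gives monotonicity of $\E_{p^\star}[R]$ in $\eta_A$.
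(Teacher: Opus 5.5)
Your proposal is correct and follows the paper's proof essentially step for step: you differentiate the group softmax to obtain $\tau^{-1}\Cov_{p^\star}(R,A)$, then use that $A$ is affine in $R$ with the fixed slope $1/(\sigma_R(x)+\epsilon)$ to reach the variance formula. Your explicit remark that $\sign(A_i)$ does not depend on $\eta_A$, so the gated term contributes no derivative, and your pairwise log-ratio corollary make explicit points that the paper leaves implicit.
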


The verifier remains an explicit control knob even when dense self-guidance is present: for fixed guidance, increasing its weight cannot reduce the target's expected verifier reward.

\begin{proposition}[Outcome calibration corrects false-positive self-guidance]
\label{prop:sign-gate-pairwise}
Consider a verified success $y_+$ and a verifier-rejected response $y_-$ in the same mixed-outcome group. Relative to otherwise identical \emph{ungated} self-guidance, sign gating changes their target probability ratio according to
\begin{equation}
    \frac{p^\star_{\mathrm{gated}}(y_+)}{p^\star_{\mathrm{gated}}(y_-)}
    =
    \frac{p^\star_{\mathrm{ungated}}(y_+)}{p^\star_{\mathrm{ungated}}(y_-)}
    \exp\!\left(\frac{2\beta_G\GH(y_-\mid x,c)}{\tau}\right).
    \label{eq:main-sign-gate-prob-ratio}
\end{equation}
Hence, when $\GH(y_-\mid x,c)>0$, positive self-guidance on a rejected response is converted from self-reinforcement into a probability-ratio correction favoring the verified response.
\end{proposition}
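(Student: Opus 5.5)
The argument is a direct computation with the normalized group target~\eqref{eq:theory-main-group-target}. First we fix what ``otherwise identical ungated self-guidance'' means. It is the energy
\[
E^{\mathrm{ung}}_i=\eta_A A_i+\beta_G\GH(y^{(i)}\mid x,c),
\]
with the same $\eta_A$, $\beta_G$, $\tau$, reference policy, and realized group. The gated energy $E_i$ is the FlowBalance energy defined above. Both targets are Gibbs distributions on the same $N$ responses, so in any pairwise ratio the group partition cancels. This is the same cancellation that gives~\eqref{eq:relative-mass} and Proposition~\ref{thm:profiled-balance}. We are left with
\[
\frac{p^\star(y_+)}{p^\star(y_-)}=\frac{\piref(y_+\mid x)}{\piref(y_-\mid x)}\exp\!\left(\frac{E(y_+)-E(y_-)}{\tau}\right)
\]
for either energy.

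Next we use the mixed-outcome hypothesis to read off the signs. The rewards in the group are not all equal, so $\sigma_R(x)>0$. A verified success has $R_+=\max R>\mu_R(x)$, hence $A(y_+)>0$. A rejected response has $R_-<\mu_R(x)$, hence $A(y_-)<0$. Therefore $\sign(A(y_+))=1$ and $\sign(A(y_-))=-1$. On $y_+$ the gated and ungated energies coincide. On $y_-$ they differ by
\[
E^{\mathrm{gated}}(y_-)-E^{\mathrm{ung}}(y_-)=-2\beta_G\GH(y_-\mid x,c).
\]

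We then divide the gated ratio by the ungated ratio. The reference ratio and the $y_+$ energy cancel, and what remains is
\[
\exp\!\left(-\bigl(E^{\mathrm{gated}}(y_-)-E^{\mathrm{ung}}(y_-)\bigr)/\tau\right)=\exp\!\left(2\beta_G\GH(y_-\mid x,c)/\tau\right),
\]
which is~\eqref{eq:main-sign-gate-prob-ratio}. For the final claim, if $\GH(y_-)>0$ and $\beta_G>0$, this factor exceeds $1$, so gating strictly raises the relative mass on $y_+$. The ungated term $+\beta_G\GH(y_-)$ had pushed $y_-$ up; under gating it becomes $-\beta_G\GH(y_-)$, so the same score now counts against $y_-$.

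The main obstacle is only to pin down the hypotheses carefully. The step that needs care is the sign step. We must check that ``mixed-outcome'' together with the group-relative advantage~\eqref{eq:group-advantage} really forces $A(y_+)>0>A(y_-)$ for any verified/rejected pair. With binary rewards this is immediate. With graded rewards we would state it as the assumption that $y_+$ is above the group mean and $y_-$ below it. We must also confirm that the two targets use the identical group for normalization, so that the partition cancels exactly.
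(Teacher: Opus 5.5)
Your proposal is correct and follows the paper's proof essentially step for step. The group normalizer cancels in each pairwise ratio, the reference ratio cancels between the gated and ungated targets, and the sign assignments $\sign(A(y_+))=1$, $\sign(A(y_-))=-1$ leave an energy-gap difference of exactly $2\beta_G\GH(y_-\mid x,c)$. Your added check that a mixed-outcome group forces $A(y_+)>0>A(y_-)$ is consistent with the appendix version, which takes these advantage signs as a hypothesis.
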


This proposition captures the intended useful-but-imperfect regime: privileged hindsight can assign positive local likelihood both to correct reasoning and to plausible failures. FlowBalance preserves the ability of guidance to distinguish successful modes while reversing its false-positive support where the verifier rejects the response. Appendix~\ref{sec:sign-gated-numerics} provides exactly enumerable diagnostics for the same target, including a reliability sweep over imperfect self-guidance.

%% file: sections/experiment.tex
\section{Experiments}
\label{sec:experiments}

We evaluate whether FlowBalance supports reliable policy self-improvement from on-policy reasoning experience along four axes. First, we compare final mathematical reasoning accuracy against a verifier-only RL baseline, direct on-policy distillation, a verifier--distillation hybrid, and outcome-only trajectory balance. Second, we inspect update efficiency, late-training stability, and response-length behavior. Third, we isolate verifier grounding and self-guidance strength. Fourth, we test whether the learned response distribution retains multiple successful strategies through an LLM-judged AIME24 diagnostic. Appendix~\ref{app:reasoning-experimental-details} gives the training and decoding details, Appendix~\ref{app:aime24-multisample-details} gives the diversity protocol, and Appendix~\ref{app:additional-results} gives the full ablation tables.

\subsection{Self-Improvement Evaluation Setup}
\label{sec:experimental-setup}

We evaluate Qwen3-4B and Qwen3-8B~\citep{yang2025qwen3}. Every deployed policy receives only the problem statement. For FlowBalance, a frozen current-policy snapshot additionally sees the training solution or task feedback only while scoring already sampled tokens; the resulting self-guidance is unavailable at inference. RLSD uses an analogous frozen current-policy scoring path, whereas OPSD uses a fixed privileged teacher according to its original formulation. All methods share the same training prompts, rollout group size, verifier, response-length cap, checkpoint schedule, and evaluation script within each backbone. We compare GRPO~\citep{shao2024deepseekmath,yu2025dapo}, OPSD~\citep{zhao2026opsd}, RLSD~\citep{yang2026rlsd}, FlowRL~\citep{zhu2025flowrl}, and FlowBalance. Table~\ref{tab:qwen-main} reports final benchmark accuracy, and Figure~\ref{fig:math-training-dynamics} reports the intermediate traces used to diagnose update efficiency, stability, and response-length behavior. Appendix~\ref{app:reasoning-experimental-details}, especially Table~\ref{tab:reasoning-config-fields}, lists the concrete training and decoding fields used for reproducibility.

\subsection{Verifier-Grounded Self-Improvement on Mathematical Reasoning}
\label{sec:math-reasoning}

\paragraph{Main Results.}
Table~\ref{tab:qwen-main} summarizes mathematical reasoning results across two model backbones. All entries report step-180 results over five seeds. AIME24 is evaluated with Pass@16 to reflect competition-style sampling, while HMMT25, Minerva, MATH500, and OlympiadBench use Pass@1. The reported aggregate averages summarize broad performance rather than tuning to a single benchmark.

\begin{table*}[t]
\centering
\tiny
\setlength{\tabcolsep}{2.5pt}
\caption{Mathematical reasoning results across Qwen3-4B and Qwen3-8B. AIME24 uses Pass@16; all other benchmarks use Pass@1. Entries are percentages reported as mean $\pm$ sample standard deviation over five seeds at step 180. ``Avg.'' averages the five reported benchmark means.}
\label{tab:qwen-main}
\resizebox{\textwidth}{!}{%
\begin{tabular}{ll c cccc c}
\toprule
Model & Method & AIME24@16 & HMMT25@1 & Minerva@1 & MATH500@1 & Olympiad@1 & Avg. \\
\midrule
\multirow{5}{*}{Qwen3-4B}
& GRPO
& $78.00 \pm 1.83$
& $26.67 \pm 2.36$ & $51.18 \pm 1.36$ & $92.04 \pm 0.98$ & $63.68 \pm 0.58$ & $62.31$ \\
& OPSD
& $65.33 \pm 3.80$
& $14.67 \pm 2.98$ & $47.28 \pm 0.56$ & $87.56 \pm 1.34$ & $55.76 \pm 1.47$ & $54.12$ \\
& RLSD
& $73.33 \pm 2.36$
& $21.33 \pm 3.80$ & $50.29 \pm 0.88$ & $91.44 \pm 0.52$ & $61.36 \pm 0.66$ & $59.55$ \\
& FlowRL
& $75.33 \pm 1.83$
& $30.67 \pm 4.94$ & $\mathbf{51.99 \pm 1.51}$ & $92.84 \pm 0.62$ & $65.25 \pm 0.49$ & $63.22$ \\
& FlowBalance
& $\mathbf{80.00 \pm 0.00}$
& $\mathbf{32.00 \pm 2.98}$ & $50.51 \pm 0.56$ & $\mathbf{93.28 \pm 0.59}$ & $\mathbf{65.49 \pm 0.92}$ & $\mathbf{64.26}$ \\
\midrule
\multirow{5}{*}{Qwen3-8B}
& GRPO
& $85.33 \pm 1.83$
& $31.33 \pm 7.67$ & $52.87 \pm 1.02$ & $93.16 \pm 0.83$ & $64.78 \pm 0.62$ & $65.49$ \\
& OPSD
& $48.67 \pm 3.80$
& $4.00 \pm 3.65$ & $38.46 \pm 3.85$ & $74.56 \pm 4.81$ & $40.09 \pm 3.57$ & $41.16$ \\
& RLSD
& $82.67 \pm 3.65$
& $28.00 \pm 1.83$ & $52.94 \pm 1.38$ & $93.44 \pm 0.17$ & $63.56 \pm 1.19$ & $64.12$ \\
& FlowRL
& $86.67 \pm 0.00$
& $30.67 \pm 4.35$ & $52.79 \pm 1.37$ & $92.92 \pm 0.50$ & $66.20 \pm 1.05$ & $65.85$ \\
& FlowBalance
& $\mathbf{89.33 \pm 1.49}$
& $\mathbf{34.67 \pm 9.89}$ & $\mathbf{53.68 \pm 0.78}$ & $\mathbf{93.52 \pm 0.30}$ & $\mathbf{66.85 \pm 0.46}$ & $\mathbf{67.61}$ \\
\bottomrule
\end{tabular}%
}
\end{table*}

On Qwen3-4B, FlowBalance reaches a five-benchmark average of $64.26$, improving over GRPO by $1.95$ points, OPSD by $10.14$ points, RLSD by $4.71$ points, and FlowRL by $1.04$ points. It outperforms OPSD and RLSD on all five benchmarks and improves over GRPO on AIME24, HMMT25, MATH500, and OlympiadBench; FlowRL obtains the highest Minerva mean on this backbone. On Qwen3-8B, FlowBalance reaches $67.61$ and obtains the best mean on every reported benchmark, improving over GRPO by $2.12$ points, OPSD by $26.45$ points, RLSD by $3.49$ points, and FlowRL by $1.76$ points. The comparison to FlowRL isolates the value of self-guidance inside the same trajectory-distribution family, while the comparisons to GRPO, OPSD, and RLSD show that neither verifier-only optimization nor direct local self-imitation explains the aggregate gain. A plausible explanation for OPSD's poor reasoning performance is that imitating a solution-conditioned privileged teacher can suppress epistemic verbalization and shorten reasoning, weakening uncertainty-driven exploration and self-correction~\citep{kim2026degradation}; this is consistent with Figure~\ref{fig:math-response-length}. The largest FlowBalance gain over stronger baselines appears on AIME24 Pass@16, where allocating mass across multiple correct trajectories is especially useful. The smaller but consistent gains on MATH500 and OlympiadBench indicate that this distributional update does not trade broad Pass@1 accuracy for a sampling-heavy metric.

\paragraph{Training Dynamics.}
Figure~\ref{fig:math-training-dynamics} summarizes the dynamics of repeated policy improvement on Qwen3-8B. The accuracy panels use matched rollout and evaluation budgets for FlowBalance and GRPO, while the response-length panel compares FlowBalance with direct OPSD. First, FlowBalance reaches $0.5$ AIME24 validation accuracy in about $100$ steps, compared with roughly $143$ for GRPO, a $1.43\times$ reduction in updates to the threshold. Second, FlowBalance remains near its peak accuracy throughout $400$ steps, whereas GRPO degrades sharply after approximately step $180$. Third, FlowBalance maintains substantially longer reasoning trajectories than direct OPSD, which rapidly collapses to short responses. These observations support a narrow but useful claim: in this fixed-task inner loop, verifier-grounded self-guidance accelerates and stabilizes the update while avoiding the shortcut behavior of ungrounded local imitation.

\begin{figure*}[t]
    \centering
    \begin{subfigure}[t]{0.32\textwidth}
        \centering
        \includegraphics[width=\linewidth]{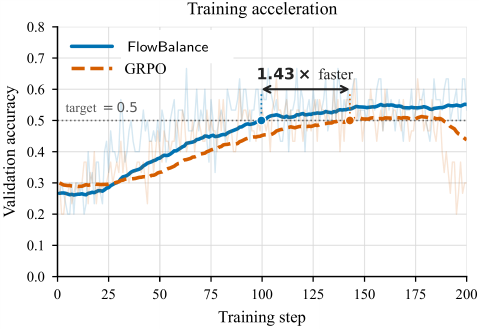}
        \caption{Training acceleration.}
        \label{fig:math-acceleration}
    \end{subfigure}
    \hfill
    \begin{subfigure}[t]{0.32\textwidth}
        \centering
        \includegraphics[width=\linewidth]{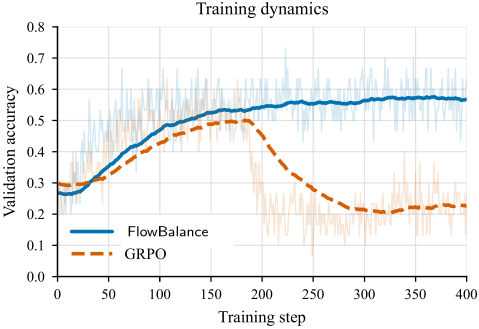}
        \caption{Training stability.}
        \label{fig:math-stability}
    \end{subfigure}
    \hfill
    \begin{subfigure}[t]{0.32\textwidth}
        \centering
        \includegraphics[width=\linewidth]{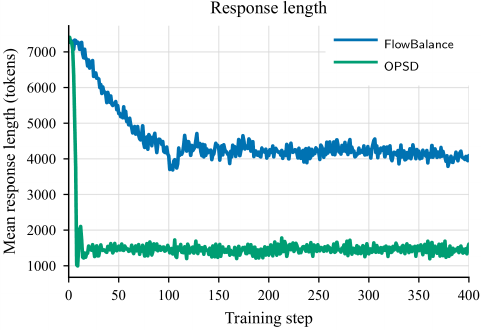}
        \caption{Response length.}
        \label{fig:math-response-length}
    \end{subfigure}
    \caption{\textbf{Training dynamics on mathematical reasoning with Qwen3-8B.} \textbf{(a) Training acceleration:} FlowBalance reaches $0.5$ AIME24 validation accuracy in about $100$ steps, compared with roughly $143$ steps for GRPO ($1.43\times$ faster). \textbf{(b) Training stability:} over $400$ training steps, FlowBalance remains near its peak performance, whereas GRPO degrades sharply after approximately step $180$. \textbf{(c) Response length:} direct OPSD rapidly collapses to substantially shorter responses, whereas FlowBalance maintains longer reasoning trajectories. In the accuracy panels, solid curves show smoothed trends and lighter curves show the corresponding per-step measurements.}
    \label{fig:math-training-dynamics}
\end{figure*}

\paragraph{Ablation Study.}
\label{sec:ablation-study}
We study the two coefficients in the FlowBalance energy:
\begin{equation}
    E_{\mathrm{FlowBalance}}(y)
    =
    \eta_A A_{\mathcal G}(y)
    +\beta_G\GH(y)\sign\!\big(A_{\mathcal G}(y)\big).
\end{equation}
We vary $\eta_A$ to control verifier grounding and $\beta_G$ to control self-guidance strength. The ablation uses one-dimensional sweeps around the default setting rather than a full grid, so that each table isolates one mechanism while holding the other coefficient fixed. Table~\ref{tab:ablation-eta-a} sweeps $\eta_A\in\{5,10,15\}$, and Table~\ref{tab:ablation-beta-q} sweeps the self-guidance coefficient $\beta_G\in\{1,2,3\}$. The default run corresponds to $\eta_A=15$ and $\beta_G=1$; the $\eta_A=10$ run and the full $\beta_G$ sweep are now complete. Each ablation row uses the same backbone, training budget, seeds, and five-benchmark average as Table~\ref{tab:qwen-main}; in addition to accuracy, we track training stability to detect settings that improve one benchmark at the cost of unstable optimization.

\begin{table}[t]
\centering
\begin{minipage}[t]{0.47\linewidth}
\centering
\small
\caption{Ablation over verifier coefficient $\eta_A$.}
\label{tab:ablation-eta-a}
\setlength{\tabcolsep}{5pt}
\begin{tabular}{lccc}
\toprule
$\eta_A$ & 5 & 10 & 15 \\
\midrule
Avg. & $65.65$ & $65.41$ & $67.61$ \\
\bottomrule
\end{tabular}
\end{minipage}
\hfill
\begin{minipage}[t]{0.47\linewidth}
\centering
\small
\caption{Ablation over self-guidance coefficient $\beta_G$.}
\label{tab:ablation-beta-q}
\setlength{\tabcolsep}{5pt}
\begin{tabular}{lccc}
\toprule
$\beta_G$ & 1 & 2 & 3 \\
\midrule
Avg. & $67.61$ & $66.48$ & $65.95$ \\
\bottomrule
\end{tabular}
\end{minipage}
\end{table}

\subsection{Correct-Strategy Diversity under Self-Improvement}
\label{sec:diversity-study}

\begin{wrapfigure}{r}{0.43\textwidth}
    \vspace{-1.0em}
    \centering
    \includegraphics[width=0.41\textwidth]{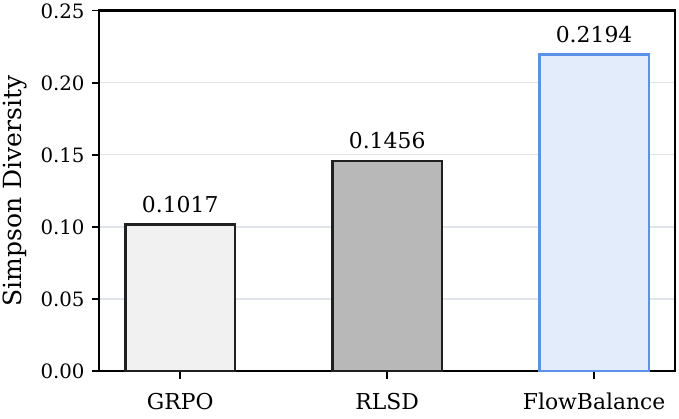}
    \caption{\textbf{LLM-judged strategy diversity.} Correct-only Simpson diversity on AIME24; protocol and scope are given in Appendix~\ref{app:aime24-multisample-details}.}
    \label{fig:llm-judged-diversity}
    \vspace{-0.8em}
\end{wrapfigure}

A reliable self-improvement update should not obtain accuracy only by sharpening one successful template. FlowBalance also broadens the kinds of correct solutions represented by the policy. We measure this with an LLM-judged semantic strategy diagnostic on AIME24: a GPT-5.5 judge extracts each full trajectory's mathematical representation and tools, clusters trajectories with the same core strategy, and we report Simpson strategy diversity,
\begin{equation}
    D_{\mathrm{Simpson}}=1-\sum_k p_k^2,
\end{equation}
where $p_k$ is the fraction of correct trajectories in strategy cluster $k$. This metric is the probability that two sampled correct trajectories use different semantic strategies; Appendix~\ref{app:aime24-multisample-details} gives the full protocol and scope.

Figure~\ref{fig:llm-judged-diversity} reports correct-only Simpson diversity. FlowBalance achieves $0.2194$, compared with $0.1017$ for GRPO and $0.1456$ for RLSD. Thus, within this diagnostic setting, FlowBalance's successful responses span a broader range of semantic solution strategies.

\begin{table*}[t]
\centering
\footnotesize
\setlength{\tabcolsep}{6pt}
\renewcommand{\arraystretch}{1.18}
\newcommand{\rollbox}[1]{\fcolorbox{black!35}{white}{\strut #1}}
\newcommand{\bluebox}[1]{\fcolorbox{blue!45!black}{white}{\strut #1}}
\caption{\textbf{Case study on AIME24 Problem 23.} Boxed spans indicate key reasoning actions in representative correct trajectories; ``$\cdots$'' denotes omitted intermediate text.}
\label{tab:aime24-case-study}
\begin{tabularx}{\textwidth}{p{0.11\textwidth}X}
\toprule
& \centering\textbf{Content (boxed = reasoning actions; ``$\cdots$'' = omitted)}\tabularnewline
\midrule
\textbf{Question}
&
Tetrahedron $ABCD$ satisfies
$AB=CD=\sqrt{41}$, $AC=BD=\sqrt{80}$, and $BC=AD=\sqrt{89}$.
If the common distance from the incenter to the four faces is $m\sqrt n/p$,
find $m+n+p$.
\tabularnewline
\midrule
\rowcolor{black!5}
\textbf{GRPO}
&
\textbf{Common Cayley--Menger route.}
``$\cdots$ all faces have sides $\sqrt{41},\sqrt{80},\sqrt{89}$ $\cdots$
\rollbox{$S=24\sqrt{21}$} $\cdots$
\rollbox{build Cayley--Menger matrix $M$ from $41,80,89$} $\cdots$
\rollbox{diagonalize the distance block by symmetry} $\cdots$
\rollbox{$\det M=819200$} $\cdots$
\rollbox{$V=160/3$} $\cdots$
\rollbox{$r=3V/S=20\sqrt{21}/63$} $\cdots$
\rollbox{$20+21+63=104$} $\cdots$''
\tabularnewline
\midrule
\rowcolor{blue!10}
\textbf{\textcolor{blue!65!black}{FlowBalance}}
&
\textbf{Hidden box-embedding route.}
``$\cdots$ observe
\bluebox{$41=4^2+5^2$, $80=4^2+8^2$, $89=5^2+8^2$} $\cdots$
\bluebox{embed in a $4\times5\times8$ box} $\cdots$
\bluebox{$A=(0,0,0)$, $B=(4,5,0)$, $C=(4,0,8)$, $D=(0,5,8)$} $\cdots$
\bluebox{verify all six edge lengths} $\cdots$
\bluebox{scalar triple product gives $V=160/3$} $\cdots$
\bluebox{$r=20\sqrt{21}/63$} $\cdots$
\bluebox{$20+21+63=104$} $\cdots$''
\tabularnewline
\bottomrule
\end{tabularx}
\end{table*}

\paragraph{Case Study.}
Table~\ref{tab:aime24-case-study} illustrates what the LLM-judged strategy clusters are intended to capture: differences in the mathematical objects and representations that make a solution work. We choose this example for its structural contrast rather than for per-problem win rate. GRPO uses a standard and broadly applicable Cayley--Menger determinant solution: it treats the tetrahedron as six pairwise distances and obtains the volume from a distance invariant. FlowBalance follows a different route by recognizing $41=4^2+5^2$, $80=4^2+8^2$, and $89=5^2+8^2$, which reveals a hidden $4\times5\times8$ rectangular-box embedding and reduces the volume computation to a scalar triple product. This case study therefore grounds the aggregate LLM-judged result: FlowBalance can place probability mass on correct trajectories that are not merely rephrasings of the dominant solution template, but use a different mathematical representation altogether. Appendix~\ref{app:additional-diversity-cases} gives three additional AIME24 examples in the same format, covering envelope versus multiple-root reasoning, planar circle tangency versus implicit-surface normals, and coordinate elimination versus trigonometric hyperbola parameterization.

%% file: sections/related_work.tex
\section{Related Work}
\label{sec:related-work}

\paragraph{Reinforcement learning from verifiable outcomes.}
RLVR is a major route to improving LLM reasoning with automatically checkable answers~\citep{shao2024deepseekmath,yu2025dapo,zhou2025evolving,yu2025rlev,liang2025g2rl,xia2026dirl}. Group-based methods estimate advantages from multiple responses to one prompt without a separately learned critic. Recent work improves this optimization through sequence-level ratios, trust regions, variance-aware weighting, dynamic clipping, and adaptive baselines or learning rates~\citep{zheng2025gspo,xie2025spo,zhang2025gvpo,yang2025dcpo,huang2026oblr}. These methods provide reliable outcome grounding, but their supervision remains response-level. FlowBalance addresses the complementary experience-to-policy question: how sparse outcomes and dense training-time evidence should define a normalized distribution over complete responses.

\paragraph{Privileged self-guidance and on-policy distillation.}
Knowledge distillation transfers behavior from a teacher to a student, while on-policy distillation evaluates student-generated trajectories under a teacher~\citep{bucila2006model,hinton2015distilling,agarwal2024onpolicy}. Recent methods use a privileged view of the same model, conditioned on reference answers, demonstrations, correct rollouts, or environment feedback~\citep{hubotter2026sdpo,zhao2026opsd,penaloza2026privileged,shenfeld2026sdft,yang2026rlsd,ye2026opcd}. This produces dense token-level information, but it can also shorten reasoning, suppress uncertainty, leak unavailable context into the learning signal, or plateau and degrade~\citep{yang2026rlsd,kim2026degradation,kim2026compresses,zhu2026manyfaces}. Existing work modifies, selects, or contrasts local teacher signals~\citep{pan2026rlcsd,yu2026dopd,zhao2026rosd,zhang2026dasd,jia2026aopd,xu2026tip,kim2026rebellious}; concurrent $\beta$-OPSD derives an efficient KL-regularized target with a tunable reference--teacher tradeoff~\citep{xu2026betaopsd}. FlowBalance uses the same-model privileged scorer only as a stopped \emph{self-guidance feature}. The algorithmic object is instead a verifier-grounded complete-response distribution fitted by trajectory balance, with no separate token-level imitation loss.

\paragraph{Guided self-improvement and self-evolving systems.}
Self-evolving systems add an outer loop that generates, selects, or schedules new learning experiences. R-Few, for example, stabilizes a Challenger--Solver loop with a small pool of human anchors and an online difficulty-based curriculum~\citep{yu2025rfew}. Its primary object is the evolving task and curriculum distribution. FlowBalance studies a complementary inner-loop object: given prompts and on-policy solution trajectories, how should those experiences update the policy without amplifying false self-confidence or collapsing successful-strategy diversity? We do not combine the two systems experimentally; rather, R-Few illustrates how an outer-loop experience generator could supply tasks to a FlowBalance-style distributional policy update.

\paragraph{Distribution matching and trajectory balance.}
GFlowNets learn stochastic construction policies that sample objects in proportion to an unnormalized reward rather than concentrating only on maximizers~\citep{bengio2021gflownet,bengio2023gflownet,xu2025robustllm,panaganti2026gdro,panaganti2022robust}. Trajectory balance enforces this condition over complete trajectories and improves long-horizon credit assignment~\citep{malkin2022trajectory}. For LLM reasoning, Flow of Reasoning and FlowRL instantiate trajectory balance with a prompt-conditioned partition term~\citep{yu2025for,zhu2025flowrl}, while GFlowRL replaces the auxiliary partition network with an in-batch rollout-group estimate~\citep{du2026gflowrl}. FlowBalance builds on this distributional lineage but changes the target energy: verified outcome advantages are combined with outcome-calibrated privileged-hindsight guidance, and the resulting reference-supported distribution is profiled over the rollout group. The comparison to FlowRL in our experiments isolates this additional guidance inside the same broad trajectory-balance family.

%% file: sections/conclusion.tex
\section{Conclusions and Discussions}
\label{sec:conclusion}

\paragraph{Conclusion.}
FlowBalance treats policy self-improvement as a distribution-learning problem over the model's own on-policy reasoning experience. A frozen privileged-hindsight view of the same policy supplies dense self-guidance, a verifier determines whether that guidance should be retained or reversed, and profiled trajectory balance fits the resulting reference-supported target over complete responses. This separation is central: dense guidance informs the target, but the policy is updated only through the normalized trajectory-balance objective. The theory shows that the profiled update preserves all within-group contrasts, is the minimum reverse-KL displacement at its attained energy level, retains monotonic verifier control of target reward, and converts false-positive guidance on rejected trajectories into an anti-self-confirmation correction. Across Qwen3-4B and Qwen3-8B, FlowBalance achieves the strongest five-benchmark average among GRPO, OPSD, RLSD, FlowRL, and FlowBalance. It also reaches the AIME24 validation threshold in fewer updates than GRPO, remains stable over extended training, avoids direct OPSD's response-length collapse, and exhibits higher correct-only semantic strategy diversity in the controlled AIME24 diagnostic.

\paragraph{Limitations and scope.}
Four limitations delimit these claims. First, the large-scale experiments focus on mathematical reasoning, so generalization to agentic, multimodal, or other long-horizon domains remains open. Second, the response-length comparison diagnoses a correlated failure mode but does not establish that longer responses cause higher accuracy. Third, the diversity analysis is an LLM-judged AIME24 diagnostic at one checkpoint and seed rather than a multi-seed human study. Fourth, we isolate the experience-to-policy inner loop on a fixed prompt distribution; FlowBalance is not by itself a full self-evolving system that generates or curates new tasks. A natural next step is to combine the distributional update with an outer-loop curriculum or guided task generator, including R-Few-style Challenger--Solver systems, while separately measuring task drift, policy stability, and strategy diversity.

%% file: sections/appendix.tex
\section{Detailed-Balance View of FlowBalance}
\label{app:detailed-balance}

\paragraph{Detailed balance.}
Let $q$ be a probability distribution on the response space $\Y(x)$, and let $K(y\to y')$ be a Markov transition kernel. The kernel is reversible with respect to $q$ if, for every pair $y,y'\in\Y(x)$,
\begin{equation}
    q(y)K(y\to y')
    =
    q(y')K(y'\to y).
    \label{eq:app-detailed-balance-kernel}
\end{equation}
This is the detailed-balance condition: at equilibrium, the probability flow from $y$ to $y'$ equals the reverse flow from $y'$ to $y$. Summing Eq.~\eqref{eq:app-detailed-balance-kernel} over $y$ shows that $q$ is a stationary distribution of $K$. Whenever both transition probabilities are positive, detailed balance also gives
\begin{equation}
    \frac{K(y\to y')}{K(y'\to y)}
    =
    \frac{q(y')}{q(y)}.
    \label{eq:app-detailed-balance-transition-ratio}
\end{equation}

\paragraph{FlowBalance as an equilibrium distribution.}
For a fixed prompt--context pair $(x,c)$, FlowBalance defines the Gibbs target
\begin{equation}
    p^\star(y\mid x,c)
    =
    \frac{1}{Z_{\mathrm{FlowBalance}}(x,c)}
    \piref(y\mid x)
    \exp\!\left(
      \frac{E_{\mathrm{FlowBalance}}(y\mid x,c)}{\tau}
    \right).
    \label{eq:app-detailed-balance-target}
\end{equation}
If a reversible Markov kernel were constructed with $p^\star$ as its equilibrium distribution, its forward--reverse transition ratio would satisfy
\begin{equation}
    \frac{K(y\to y')}{K(y'\to y)}
    =
    \frac{\piref(y'\mid x)}{\piref(y\mid x)}
    \exp\!\left(
      \frac{
        E_{\mathrm{FlowBalance}}(y'\mid x,c)
        -E_{\mathrm{FlowBalance}}(y\mid x,c)
      }{\tau}
    \right).
    \label{eq:app-flowbalance-transition-ratio}
\end{equation}
Thus the reference policy supplies the baseline relative mass, while the FlowBalance energy exponentially reweights transitions toward responses preferred by the verifier-grounded self-guidance energy. The partition function does not appear in this pairwise ratio, but it is required to turn these relative preferences into a normalized distribution.

\paragraph{Trajectory-balance interpretation.}
FlowBalance does not construct or simulate the Markov kernel $K$. Instead, trajectory balance directly imposes the global identity
\begin{equation}
    Z_{\mathrm{FlowBalance}}(x,c)\pith(y\mid x)
    =
    \piref(y\mid x)
    \exp\!\left(
      \frac{E_{\mathrm{FlowBalance}}(y\mid x,c)}{\tau}
    \right)
    \label{eq:app-trajectory-balance-identity}
\end{equation}
for each complete response. Taking logarithms and multiplying by $\tau$ gives the trajectory-balance residual
\begin{equation}
    \tau\log Z_{\mathrm{FlowBalance}}(x,c)
    +\tau\log\frac{\pith(y\mid x)}{\piref(y\mid x)}
    -E_{\mathrm{FlowBalance}}(y\mid x,c).
    \label{eq:app-tb-as-equilibrium-residual}
\end{equation}
When this residual is zero for all responses, $\pith$ equals the normalized target in Eq.~\eqref{eq:app-detailed-balance-target} and therefore has the same pairwise probability ratios as the equilibrium distribution associated with Eq.~\eqref{eq:app-flowbalance-transition-ratio}. In this sense, detailed balance provides an equilibrium interpretation, whereas trajectory balance is the learning constraint that fits the desired distribution without explicitly defining pairwise response transitions.

%% file: sections/appendix_sign_gated.tex
\section{Proofs for Theoretical Analyses}
\label{app:proofs}
\label{app:sign-gated-proofs}

Throughout this appendix we condition on the same fixed prompt--context pair and realized rollout group as in Section~\ref{sec:theory}. We use the main-text notation
\begin{equation}
    E_i=\eta_A A_i+\beta_G\GH(y^{(i)}\mid x,c)\sign(A_i),
    \qquad
    p_i^\star
    =\frac{\piref(y^{(i)}\mid x)\exp(E_i/\tau)}
    {\sum_{j=1}^N\piref(y^{(j)}\mid x)\exp(E_j/\tau)}.
\end{equation}
When KL is computed over the realized rollout group, \(\piref(\cdot\mid x)\) denotes its restriction and renormalization to that group.

\subsection{Distributional Properties}

\begin{appendixproposition}[Profiled balance matches all within-group contrasts]
\label{app:prop-profiled-balance}
For a candidate policy $\pi$ and scalar partition/intercept $z$, consider the realized-group trajectory-balance loss
\begin{equation}
    \mathcal L_N(\pi,z)
    =\frac{1}{2N}\sum_{i=1}^N
    \left[
        \tau z
        +\tau\log\frac{\pi(y^{(i)})}{\piref(y^{(i)}\mid x)}
        -E_i
    \right]^2.
    \label{eq:app-profiled-loss-statement}
\end{equation}
After profiling out $z$, the trajectory-balance loss is zero if and only if every within-group relative probability contrast matches
\begin{equation}
    \frac{\pi(y^{(i)})}{\pi(y^{(j)})}
    =
    \frac{\piref(y^{(i)}\mid x)}{\piref(y^{(j)}\mid x)}
    \exp\!\left(\frac{E_i-E_j}{\tau}\right),
    \qquad i,j\in[N].
    \label{eq:app-profiled-contrast-statement}
\end{equation}
Whenever the group target is representable, the global minimum is therefore zero. Profiling removes only one common group-level offset and preserves the remaining $N-1$ contrast directions.
\end{appendixproposition}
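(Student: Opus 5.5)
The plan is to rewrite the loss as a least-squares problem in a single scalar intercept and minimize over $z$ in closed form. Define, for each $i\in[N]$,
\begin{equation}
    u_i(\pi)=\tau\log\frac{\pi(y^{(i)})}{\piref(y^{(i)}\mid x)}-E_i ,
\end{equation}
which is finite because $\pi$ puts positive mass on every realized response. Then $\mathcal L_N(\pi,z)=\frac{1}{2N}\sum_i(\tau z+u_i)^2$ is a strictly convex quadratic in $z$, with unique minimizer $\tau\hat z=-\bar u$, where $\bar u=\frac1N\sum_i u_i$. This is exactly the group estimator of Eq.~\eqref{eq:group-logz}. Substituting it back gives the profiled loss
\begin{equation}
    \min_z\mathcal L_N(\pi,z)=\frac{1}{2N}\sum_{i=1}^N(u_i-\bar u)^2=\frac12\,\widehat{\Var}(u_1,\ldots,u_N).
\end{equation}

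For the equivalence, the profiled loss is a sum of squares. It therefore vanishes if and only if $u_i=\bar u$ for all $i$, which holds if and only if $u_i=u_j$ for every pair $i,j$. Expanding $u_i-u_j=0$ and dividing by $\tau>0$ gives
\[
\log\frac{\pi(y^{(i)})}{\pi(y^{(j)})}=\log\frac{\piref(y^{(i)}\mid x)}{\piref(y^{(j)}\mid x)}+\frac{E_i-E_j}{\tau}.
\]
Exponentiating yields Eq.~\eqref{eq:app-profiled-contrast-statement}. Every step is reversible, so both directions follow at once.

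For the global minimum, the loss is nonnegative. If the group target is representable, i.e.\ some admissible $\pi$ has group-renormalized values equal to $p^\star_i$ from the main-text target, then that $\pi$ satisfies the contrast identities, since $\log p_i^\star-\log p_j^\star$ is exactly the right-hand side. Its profiled loss is therefore zero, so zero is the global minimum. Conversely, any zero-loss $\pi$, once restricted and renormalized to the group, equals $p^\star$, because the ratios fix the vector up to scale and normalization fixes the scale.

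For the counting statement, view $u=(u_1,\ldots,u_N)\in\mathbb R^N$. Profiling subtracts the projection of $u$ onto $\mathbf 1$, so the loss depends only on the projection of $u$ onto $\mathbf 1^\perp$. This subspace has dimension $N-1$ and is spanned by the contrasts $u_i-u_1$ for $i=2,\ldots,N$. Hence exactly one direction, the common offset, is removed, and all $N-1$ independent log-ratio contrasts remain penalized.

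There is no real obstacle here; the argument is elementary. The only point needing care is to make precise what ``preserves $N-1$ directions'' and ``representable'' mean in the orthogonal-decomposition and renormalization steps above.
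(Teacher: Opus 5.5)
Your proposal is correct and follows essentially the same route as the paper: treat the loss as a convex quadratic in the single scalar $z$, profile it out, note that zero loss forces all residuals $u_i$ to coincide, and cancel the common offset by pairwise subtraction to recover the contrast identities. Writing the profiled loss explicitly as $\frac{1}{2N}\sum_i(u_i-\bar u)^2$ (with $\tau\hat z=-\bar u$ matching the group estimator) and using the projection onto $\mathbf{1}^{\perp}$ only makes more explicit what the paper handles via its separate converse construction of $z$ and its informal ``$N-1$ contrast directions'' remark.
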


\begin{proof}
For a candidate policy \(\pi\), the trajectory-balance residual for response \(y^{(i)}\) and scalar partition/intercept \(z\) is
\begin{equation}
    \tau z
    +\tau\log\frac{\pi(y^{(i)})}{\piref(y^{(i)}\mid x)}
    -E_i .
\end{equation}
The empirical trajectory-balance loss over the realized group is therefore
\begin{equation}
    \mathcal L_N(\pi,z)
    =\frac{1}{2N}\sum_{i=1}^N
    \left[
        \tau z
        +\tau\log\frac{\pi(y^{(i)})}{\piref(y^{(i)}\mid x)}
        -E_i
    \right]^2 .
    \label{eq:app-profiled-tb-loss}
\end{equation}
For fixed \(\pi\), this is a convex quadratic function of the single scalar \(z\). Differentiating Eq.~\eqref{eq:app-profiled-tb-loss} with respect to \(z\) gives
\begin{equation}
    \frac{\partial \mathcal L_N(\pi,z)}{\partial z}
    =\frac{\tau}{N}\sum_{i=1}^N
    \left[
        \tau z
        +\tau\log\frac{\pi(y^{(i)})}{\piref(y^{(i)}\mid x)}
        -E_i
    \right].
\end{equation}
Thus the profiled intercept \(\widehat z\) is the unique solution of
\begin{equation}
    \sum_{i=1}^N
    \left[
        \tau \widehat z
        +\tau\log\frac{\pi(y^{(i)})}{\piref(y^{(i)}\mid x)}
        -E_i
    \right]
    =0 .
    \label{eq:app-profiled-z-foc}
\end{equation}
After profiling, the loss is zero if and only if every squared residual in Eq.~\eqref{eq:app-profiled-tb-loss} is zero at \(z=\widehat z\). Hence, for every pair \(i,j\),
\begin{align}
    \tau \widehat z
    +\tau\log\frac{\pi(y^{(i)})}{\piref(y^{(i)}\mid x)}
    -E_i
    &=0, \\
    \tau \widehat z
    +\tau\log\frac{\pi(y^{(j)})}{\piref(y^{(j)}\mid x)}
    -E_j
    &=0.
\end{align}
Subtracting the second equality from the first cancels the profiled scalar \(\tau\widehat z\) and yields
\begin{equation}
    \tau\log\frac{\pi(y^{(i)})}{\pi(y^{(j)})}
    -\tau\log\frac{\piref(y^{(i)}\mid x)}{\piref(y^{(j)}\mid x)}
    =E_i-E_j .
\end{equation}
Dividing by \(\tau\), exponentiating, and rearranging the reference-policy ratio gives Eq.~\eqref{eq:app-profiled-contrast-statement}.

Conversely, if Eq.~\eqref{eq:app-profiled-contrast-statement} holds for all pairs, then the quantities
\begin{equation}
    \tau\log\frac{\pi(y^{(i)})}{\piref(y^{(i)}\mid x)}-E_i
\end{equation}
are all equal to the same constant. Choosing \(z\) to be minus that constant divided by \(\tau\) makes every trajectory-balance residual zero. Therefore zero profiled loss is equivalent to matching all within-group target log probability-ratio contrasts. The scalar \(z\) removes exactly one common offset and cannot change any pairwise contrast.
\end{proof}

\begin{appendixproposition}[Minimum reference displacement]
\label{app:prop-flowbalance-razor}
Let $\piref$ denote the reference distribution restricted and renormalized to the realized rollout group. Then, for any group distribution $p\in\Delta_N$,
\begin{equation}
    \KL(p\|\piref)
    =
    \KL(p^\star\|\piref)
    +\KL(p\|p^\star)
    +\frac{1}{\tau}
    \left(
        \sum_{i=1}^N p_iE_i
        -\sum_{i=1}^N p_i^\star E_i
    \right).
    \label{eq:app-kl-decomposition-statement}
\end{equation}
Consequently, among all group distributions satisfying $\sum_i p_iE_i\geq\sum_i p_i^\star E_i$, the FlowBalance target $p^\star$ uniquely minimizes the reverse-KL displacement $\KL(p\|\piref)$.
\end{appendixproposition}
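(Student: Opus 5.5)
The plan is to derive the decomposition directly from the Gibbs form of $p^\star$ by splitting a single log-ratio, and then read off the minimality claim from nonnegativity of KL.

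\textbf{Step 1: write $p^\star$ against the renormalized reference.} Write $\bar\pi_i$ for the reference restricted and renormalized to the group, $\bar\pi_i=\piref(y^{(i)}\mid x)/\sum_j\piref(y^{(j)}\mid x)$. Because the renormalizing constant appears in both numerator and denominator of $p_i^\star$, it cancels. Hence
\[
p_i^\star=\bar\pi_i\exp(E_i/\tau)/\bar Z,
\qquad
\bar Z=\sum_j\bar\pi_j\exp(E_j/\tau).
\]
Taking logs gives the basic identity $\log(p_i^\star/\bar\pi_i)=E_i/\tau-\log\bar Z$. I will also record that $\bar\pi_i>0$ on the realized responses, so $p^\star$ has full support on the group.

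\textbf{Step 2: split the log-ratio.} For any $p\in\Delta_N$ and any $i$ with $p_i>0$,
\[
\log\frac{p_i}{\bar\pi_i}=\log\frac{p_i}{p_i^\star}+\log\frac{p_i^\star}{\bar\pi_i}.
\]
Averaging under $p$ gives $\KL(p\|\bar\pi)=\KL(p\|p^\star)+\tau^{-1}\sum_ip_iE_i-\log\bar Z$. Setting $p=p^\star$ in the same identity gives $\KL(p^\star\|\bar\pi)=\tau^{-1}\sum_ip_i^\star E_i-\log\bar Z$. Subtracting the two eliminates $\log\bar Z$ and yields exactly Eq.~\eqref{eq:app-kl-decomposition-statement}.

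\textbf{Step 3: deduce minimality and uniqueness.} Suppose $\sum_ip_iE_i\ge\sum_ip_i^\star E_i$. Then the last term in the decomposition is nonnegative, and by Gibbs' inequality so is $\KL(p\|p^\star)$. Therefore $\KL(p\|\bar\pi)\ge\KL(p^\star\|\bar\pi)$. Equality forces $\KL(p\|p^\star)=0$, and hence $p=p^\star$. The target itself satisfies the constraint with equality, so it is admissible, and it is the unique minimizer.

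\textbf{Main obstacle.} I expect the hardest part to be bookkeeping rather than any real estimate. Three points need care. First, the reference must be the group-renormalized one, so that both KL terms are genuine divergences between distributions on $\Delta_N$. Second, entries with $p_i=0$ must be handled with the convention $0\log0=0$; Step 2 is applied only to indices with $p_i>0$, and the identities remain valid because $p^\star$ and $\bar\pi$ are strictly positive on the group, so every quantity is finite. Third, the signs must be tracked so that the energy term enters with coefficient $+1/\tau$. This sign is what makes higher-energy competitors pay a larger KL cost, which is exactly the direction the minimality claim needs.
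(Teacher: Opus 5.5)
Your proposal is correct and follows essentially the same route as the paper's proof. Both arguments write $\log(p_i^\star/\piref)=E_i/\tau-\log Z$, split $\log(p_i/\piref)$ through $p_i^\star$, evaluate the resulting identity at $p$ and at $p^\star$, subtract to eliminate the log-partition, and conclude from nonnegativity of $\KL(p\|p^\star)$ together with the energy gap; your explicit cancellation of the renormalizing constant and the $0\log 0$ convention for zero entries only make explicit what the paper leaves implicit.
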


\begin{proof}
From Eq.~\eqref{eq:theory-main-group-target}, for each group element,
\begin{equation}
    \log\frac{p_i^\star}{\piref(y^{(i)}\mid x)}
    =\frac{E_i}{\tau}-\log\sum_{j=1}^N\piref(y^{(j)}\mid x)\exp(E_j/\tau).
\end{equation}
Let \(p\in\Delta_N\) be any group distribution. Then
\begin{align}
    \KL(p\|\piref)
    &=\sum_{i=1}^N p_i\log\frac{p_i}{p_i^\star}
      +\sum_{i=1}^N p_i\log\frac{p_i^\star}{\piref(y^{(i)}\mid x)} \\
    &=\KL(p\|p^\star)
      +\frac{1}{\tau}\sum_{i=1}^N p_iE_i
      -\log\sum_{j=1}^N\piref(y^{(j)}\mid x)\exp(E_j/\tau).
    \label{eq:app-kl-to-target-expand}
\end{align}
Applying the same identity with \(p=p^\star\) gives
\begin{equation}
    \KL(p^\star\|\piref)
    =\frac{1}{\tau}\sum_{i=1}^N p_i^\star E_i
      -\log\sum_{j=1}^N\piref(y^{(j)}\mid x)\exp(E_j/\tau).
\end{equation}
Subtracting this expression from Eq.~\eqref{eq:app-kl-to-target-expand} yields Eq.~\eqref{eq:app-kl-decomposition-statement}:
\begin{equation}
    \KL(p\|\piref)
    =
    \KL(p^\star\|\piref)
    +
    \KL(p\|p^\star)
    +
    \frac{1}{\tau}
    \left(
        \sum_{i=1}^N p_iE_i
        -
        \sum_{i=1}^N p_i^\star E_i
    \right).
\end{equation}
If \(p\) attains at least the expected FlowBalance energy of \(p^\star\), the last term is nonnegative; \(\KL(p\|p^\star)\ge0\) as well, with equality only when \(p=p^\star\). Hence \(p^\star\) is the unique minimum-reverse-KL displacement from the reference among all such distributions.
\end{proof}

\subsection{Effects of Verifier Weighting and Sign Gating}

\begin{appendixproposition}[Verifier weight monotonically improves reward statistics]
\label{app:prop-verifier-monotonicity}
Hold the self-guidance scores and the realized rollout group fixed. For the GRPO-normalized advantage
\begin{equation}
    A_i=\frac{R_i-\bar R}{\sigma_R(x)+\epsilon},
\end{equation}
increasing the verifier coefficient $\eta_A$ monotonically increases the expected reward under the FlowBalance target:
\begin{equation}
    \frac{\partial}{\partial\eta_A}\E_{p^\star}[R]
    =
    \frac{\Var_{p^\star}(R)}
         {\tau(\sigma_R(x)+\epsilon)}
    \geq0.
    \label{eq:app-verifier-monotonicity-statement}
\end{equation}
\end{appendixproposition}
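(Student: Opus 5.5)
The plan is to recognize $p^\star$ as an exponential family in the natural parameter $\eta_A$ and then apply the standard cumulant identity: the derivative of a mean equals a covariance. With the guidance scores and the realized group held fixed, the only $\eta_A$-dependence in $E_i$ is the term $\eta_A A_i$. The gated term $\beta_G\GH(y^{(i)}\mid x,c)\sign(A_i)$ does not depend on $\eta_A$, because $\sign(A_i)$ is a function of the stopped rewards only. So I write
\begin{equation*}
    p_i^\star(\eta_A)=\frac{w_i\exp(\eta_A A_i/\tau)}{\sum_{j=1}^N w_j\exp(\eta_A A_j/\tau)},
    \qquad
    w_i=\piref(y^{(i)}\mid x)\exp\!\bigl(\beta_G\GH(y^{(i)}\mid x,c)\sign(A_i)/\tau\bigr),
\end{equation*}
where the weights $w_i>0$ do not depend on $\eta_A$.

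The first step is to differentiate the log-probabilities:
\begin{equation*}
    \partial_{\eta_A}\log p_i^\star=\frac{1}{\tau}\bigl(A_i-\E_{p^\star}[A]\bigr).
\end{equation*}
It follows that $\partial_{\eta_A}\E_{p^\star}[R]=\sum_i R_i\,\partial_{\eta_A}p_i^\star=\tau^{-1}\Cov_{p^\star}(R,A)$, which is the usual score-function identity.

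The second step substitutes the GRPO-normalized advantage $A_i=(R_i-\bar R)/(\sigma_R(x)+\epsilon)$. Here $\bar R$ and $\sigma_R(x)$ are group statistics of the realized group. They are constants with respect to $\eta_A$ and are not reweighted by $p^\star$. Since $A$ is an affine function of $R$ with positive slope $1/(\sigma_R(x)+\epsilon)$, the covariance reduces to $\Cov_{p^\star}(R,A)=\Var_{p^\star}(R)/(\sigma_R(x)+\epsilon)$, which gives the stated formula. Nonnegativity then follows because variance is nonnegative and $\tau,\sigma_R(x)+\epsilon>0$. The inequality is strict exactly when $p^\star$ puts mass on at least two distinct reward values, that is, in a mixed-outcome group, since every $p_i^\star>0$. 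Monotonicity in $\eta_A$ follows by integrating the nonnegative derivative.

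The only delicate point, and the one I would state explicitly, is the fixed-parameter convention. The baseline $\bar R$ and scale $\sigma_R$ are the empirical (uniform) group statistics, not $p^\star$-expectations, and $\sign(A_i)$ does not change with $\eta_A$. Both facts are guaranteed by the stopped-quantity assumptions stated at the start of the appendix. With these in place the computation is routine.
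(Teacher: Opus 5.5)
Your proposal is correct and follows essentially the same route as the paper: differentiate the normalized exponential-family target in $\eta_A$ to get $\tau^{-1}\Cov_{p^\star}(R,A)$, then use that $A$ is an increasing affine function of $R$ whose baseline $\bar R$ and scale $\sigma_R(x)+\epsilon$ are fixed for the realized group. Your explicit remarks that $\sign(A_i)$ does not depend on $\eta_A$, and on when the inequality is strict, are small clarifications that the paper leaves implicit.
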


\begin{proof}
Hold the self-guidance scores fixed. The only dependence of \(p_i^\star\) on \(\eta_A\) is through the term \(\eta_A A_i\). Differentiating the normalized exponential-family form gives
\begin{equation}
    \frac{\partial p_i^\star}{\partial\eta_A}
    =\frac{p_i^\star}{\tau}
    \left(A_i-\sum_{j=1}^N p_j^\star A_j\right).
\end{equation}
Therefore
\begin{equation}
    \frac{\partial}{\partial\eta_A}\E_{p^\star}[R]
    =\frac{1}{\tau}\Cov_{p^\star}(R,A).
\end{equation}
For the GRPO-normalized advantage used in the main text,
\begin{equation}
    A_i=\frac{R_i-\bar R}{\sigma_R(x)+\epsilon},
\end{equation}
where \(\bar R\) and \(\sigma_R(x)+\epsilon\) are fixed for the realized group. Hence
\begin{equation}
    \Cov_{p^\star}(R,A)
    =\frac{\Var_{p^\star}(R)}{\sigma_R(x)+\epsilon},
\end{equation}
and thus
\begin{equation}
    \frac{\partial}{\partial\eta_A}
    \E_{p^\star}[R]
    =
    \frac{\Var_{p^\star}(R)}
         {\tau(\sigma_R(x)+\epsilon)}
    \ge0,
\end{equation}
which proves Eq.~\eqref{eq:app-verifier-monotonicity-statement}.
\end{proof}

\begin{appendixproposition}[Sign gating corrects self-guidance support on failures]
\label{app:prop-sign-gate-pairwise}
Consider a verified success $y_+$ with positive advantage and a verifier-rejected response $y_-$ with negative advantage in the same rollout group. Relative to otherwise identical ungated self-guidance shaping, sign gating changes their target probability ratio according to
\begin{equation}
    \frac{p^\star_{\mathrm{gated}}(y_+)}{p^\star_{\mathrm{gated}}(y_-)}
    =
    \frac{p^\star_{\mathrm{ungated}}(y_+)}{p^\star_{\mathrm{ungated}}(y_-)}
    \exp\!\left(\frac{2\beta_G\GH(y_-\mid x,c)}{\tau}\right).
    \label{eq:app-sign-gate-ratio-statement}
\end{equation}
Hence positive privileged support on a verifier-rejected response is converted from self-reinforcement pressure into a probability-ratio correction favoring the verified response.
\end{appendixproposition}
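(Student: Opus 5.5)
The plan is to compute both pairwise ratios in closed form from the group target in Eq.~\eqref{eq:theory-main-group-target} and take their quotient. Within one realized group, the normalizing sum is shared by every response. It therefore cancels in any pairwise ratio, exactly as in Eq.~\eqref{eq:relative-mass} and Appendix Proposition~\ref{app:prop-profiled-balance}. For any energy assignment $E$ we get
\begin{equation*}
    \frac{p^\star(y_+)}{p^\star(y_-)}
    =\frac{\piref(y_+\mid x)}{\piref(y_-\mid x)}\exp\!\left(\frac{E(y_+)-E(y_-)}{\tau}\right).
\end{equation*}
The gated and ungated targets use the same reference policy, the same realized group, the same verifier term $\eta_A A$, and the same guidance scores $\GH$. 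So the reference ratio and the verifier contributions cancel in the quotient, and only the difference of guidance contributions remains.

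Next I would fix the comparison convention precisely. I read ``otherwise identical ungated self-guidance'' as the energy $E^{\mathrm{ungated}}_i=\eta_A A_i+\beta_G\GH(y^{(i)}\mid x,c)$, which is the same formula with $\sign(A_i)$ replaced by $1$. Since $A(y_+)>0$ and $A(y_-)<0$, the gated energies are $E^{\mathrm{gated}}(y_+)=E^{\mathrm{ungated}}(y_+)$ and $E^{\mathrm{gated}}(y_-)=E^{\mathrm{ungated}}(y_-)-2\beta_G\GH(y_-\mid x,c)$. Substituting these into the pairwise formula gives the quotient of the two ratios as $\exp(2\beta_G\GH(y_-\mid x,c)/\tau)$, which is Eq.~\eqref{eq:app-sign-gate-ratio-statement}.

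The interpretive claim then follows from monotonicity of the exponential. If $\GH(y_->0)$, more precisely if $\GH(y_-\mid x,c)>0$, the factor exceeds one, so the gated target strictly raises the success-to-failure ratio compared with the ungated target. Equivalently, under ungating the positive guidance on $y_-$ would add $\beta_G\GH(y_-)$ to its energy, while gating subtracts it, and the two differ by $2\beta_G\GH(y_-)$.

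The only real obstacle is definitional: making sure the ungated baseline keeps identical $\eta_A A_i$ terms and identical group membership. If the baseline instead dropped the verifier term or used a different group, the normalizers and reference factors would still cancel, but the clean factor would no longer come out. I would state this convention explicitly before the computation. A mixed-outcome group guarantees $\sign(A)=\pm1$ for the two responses, so the zero-advantage case does not arise.
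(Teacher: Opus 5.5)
Your proposal is correct and follows essentially the same route as the paper. The paper's proof also cancels the shared group normalizer within each pairwise ratio, then cancels the common reference ratio and verifier difference $\eta_A(A_+-A_-)$ between the gated and ungated targets. It reads off the factor from the $2\beta_G\GH(y_-\mid x,c)$ gap in the energy differences, taking the ungated baseline to be $\eta_A A_i+\beta_G\GH(y^{(i)}\mid x,c)$ exactly as you do.
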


\begin{proof}
Consider a verified success \(y_+\) and a verifier-rejected response \(y_-\) in the same group. Their probability ratio under the gated target is
\begin{equation}
    \frac{p^\star_{\mathrm{gated}}(y_+)}{p^\star_{\mathrm{gated}}(y_-)}
    =
    \frac{\piref(y_+\mid x)}{\piref(y_-\mid x)}
    \exp\!\left(
        \frac{E^{\mathrm{gated}}_+-E^{\mathrm{gated}}_-}{\tau}
    \right),
\end{equation}
where the group normalizer cancels. The corresponding ungated target has
\begin{equation}
    \frac{p^\star_{\mathrm{ungated}}(y_+)}{p^\star_{\mathrm{ungated}}(y_-)}
    =
    \frac{\piref(y_+\mid x)}{\piref(y_-\mid x)}
    \exp\!\left(
        \frac{E^{\mathrm{ungated}}_+-E^{\mathrm{ungated}}_-}{\tau}
    \right).
\end{equation}
The reference ratio cancels when comparing the gated and ungated probability ratios. Since \(y_+\) has positive advantage and \(y_-\) has negative advantage,
\begin{align}
    E^{\mathrm{gated}}_+-E^{\mathrm{gated}}_-
    &=\eta_A\left(A_{\mathcal G}(y_+)-A_{\mathcal G}(y_-)\right)
      +\beta_G\GH(y_+\mid x,c)+\beta_G\GH(y_-\mid x,c), \\
    E^{\mathrm{ungated}}_+-E^{\mathrm{ungated}}_-
    &=\eta_A\left(A_{\mathcal G}(y_+)-A_{\mathcal G}(y_-)\right)
      +\beta_G\GH(y_+\mid x,c)-\beta_G\GH(y_-\mid x,c).
\end{align}
The verifier difference \(\eta_A\left(A_{\mathcal G}(y_+)-A_{\mathcal G}(y_-)\right)\) is identical in the two expressions, and the self-guidance contribution differs by \(2\beta_G\GH(y_-\mid x,c)\). Therefore
\begin{equation}
    \frac{p^\star_{\mathrm{gated}}(y_+)}{p^\star_{\mathrm{gated}}(y_-)}
    =
    \frac{p^\star_{\mathrm{ungated}}(y_+)}{p^\star_{\mathrm{ungated}}(y_-)}
    \exp\!\left(\frac{2\beta_G\GH(y_-\mid x,c)}{\tau}\right),
\end{equation}
which proves Eq.~\eqref{eq:app-sign-gate-ratio-statement}.
\end{proof}

%% file: sections/appendix_experiments.tex
\section{Numerical Analysis Details}
\label{app:numerical-analysis-details}
\label{app:experimental-details}

\input{sections/sign_gated_numerical_analysis}

\section{LLM Experiment Details}
\label{app:llm-experiment-details}

\subsection{Reasoning Self-Improvement Setup and Protocol}
\label{app:reasoning-experimental-details}

\paragraph{Scope and fairness controls.}
The large-scale experiments are designed to compare policy-update objectives rather than infrastructure choices. Within each backbone, all methods use the same training prompts, rollout group size, maximum response length, verifier, optimizer schedule, checkpoint cadence, and evaluation script. For methods that use privileged training information (OPSD, RLSD, and FlowBalance), that information is available only to a frozen scoring path during training. Rollout generation and evaluation never include the solution or feedback field $c$.

\begin{table}[t]
\centering
\small
\caption{Shared reporting fields for the reasoning experiments. Entries marked ``matched'' are held identical across methods within a backbone.}
\label{tab:reasoning-config-fields}
\setlength{\tabcolsep}{4pt}
\begin{tabular}{l p{0.62\linewidth}}
\toprule
Field & Setting \\
\midrule
Backbones & Qwen3-4B, Qwen3-8B \\
Training supervision & final-answer verifier; OPSD, RLSD, and FlowBalance also use privileged training solution/feedback \\
Compared methods & GRPO, OPSD, RLSD, FlowRL, FlowBalance \\
Rollout group size & matched across methods; pending release \\
Training horizon & step-180 final table; up to 400 steps for dynamics and response-length curves \\
Seeds & five seeds for every main-table entry \\
Maximum response length & matched across methods; pending release \\
Reference policy & fixed copy of the initial checkpoint \\
Training-time scorer & OPSD: fixed privileged teacher; RLSD/FlowBalance: frozen current-policy copy conditioned on $c$ \\
FlowBalance default coefficients & $\eta_A=15$, $\beta_G=1$ for the completed default run \\
Evaluation metrics & AIME24 Pass@16; HMMT25/Minerva/MATH500/OlympiadBench Pass@1 \\
Aggregate score & unweighted mean of the five benchmark means \\
\bottomrule
\end{tabular}
\end{table}

\paragraph{Training signal construction.}
For every prompt in a minibatch, the frozen rollout snapshot samples a group of responses without seeing $c$. A rule-based verifier assigns a terminal correctness reward, from which we compute the stopped group-relative advantage in Eq.~\eqref{eq:group-advantage}. FlowBalance then reuses the same frozen snapshot as the privileged-hindsight view by conditioning it on $c$ and scoring the already sampled tokens. The self-guidance contribution is the clipped, length-averaged log-probability gain in Eq.~\eqref{eq:trajectory-guidance-score}; no response is resampled from the guidance view and no guidance gradient is taken. The sign-gated trajectory energy is optimized through the trajectory-balance residual with the rollout-group partition estimate in Eq.~\eqref{eq:group-logz}.

\paragraph{Baseline implementations.}
GRPO uses the same verifier rewards and group-relative normalization as FlowBalance but optimizes the standard reward-policy objective rather than a normalized trajectory target. OPSD applies clipped forward-KL self-distillation from a fixed privileged teacher to the current policy's on-policy trajectories~\citep{zhao2026opsd}. RLSD combines verifier and teacher signals in its policy-optimization objective. FlowRL follows the original outcome-only trajectory-balance algorithm~\citep{zhu2025flowrl}. All baselines use the same sampled responses, answer verifier, and response-length cap whenever their objectives permit this sharing.

\paragraph{Evaluation protocol.}
For AIME24, each problem is decoded with 16 samples and is counted correct if any sample matches the final answer. For HMMT25, Minerva, MATH500, and OlympiadBench, we report Pass@1. The evaluator applies the same benchmark-specific final-answer extraction and normalization to every method. All main-table entries report the mean and sample standard deviation over five seeds at step 180. The accuracy-dynamics plots use matched evaluation checkpoints for FlowBalance and GRPO; the response-length panel compares the logged training trajectories of FlowBalance and OPSD. Solid curves denote smoothed trends and lighter curves denote the corresponding per-step measurements.

\subsection{Grounding and Guidance Ablations}
\label{app:additional-results}

\paragraph{Coefficient sweeps.}
The ablations in Tables~\ref{tab:ablation-eta-a}--\ref{tab:ablation-beta-q} are one-dimensional sweeps around the default setting. In the $\eta_A\in\{5,10,15\}$ sweep, $\beta_G$ and all other training settings are held fixed. In the $\beta_G\in\{1,2,3\}$ sweep, $\eta_A$ and all other settings are held fixed. Each sweep point is evaluated with the same five-benchmark average used in Table~\ref{tab:qwen-main}; all reported sweep points contain completed five-seed runs. Tables~\ref{tab:app-ablation-eta-a-full}--\ref{tab:app-ablation-beta-t-full} provide the corresponding per-benchmark breakdown.

\paragraph{Coefficient ablations.}
Tables~\ref{tab:app-ablation-eta-a-full}--\ref{tab:app-ablation-beta-t-full} expand the coefficient ablations in Section~\ref{sec:ablation-study} to the same five benchmark columns used in the main result table. All rows use the Qwen3-8B backbone and report step-180 evaluation. AIME24 is evaluated with Pass@16, while HMMT25, Minerva, MATH500, and OlympiadBench are evaluated with Pass@1. The average is the unweighted mean of the five benchmark means.

\begin{table*}[t]
\centering
\small
\setlength{\tabcolsep}{4pt}
\caption{Full ablation over verifier coefficient $\eta_A$ on Qwen3-8B at step 180. The self-guidance coefficient is held fixed at the default value $\beta_G=1$. Entries are percentages reported as mean $\pm$ sample standard deviation over five seeds. ``Avg.'' averages the five benchmark means.}
\label{tab:app-ablation-eta-a-full}
\begin{tabular}{l c cccc c}
\toprule
$\eta_A$ & AIME24@16 & HMMT25@1 & Minerva@1 & MATH500@1 & Olympiad@1 & Avg. \\
\midrule
5  & $86.00 \pm 1.49$
   & $30.00 \pm 8.50$ & $53.46 \pm 1.24$ & $93.32 \pm 0.41$ & $65.46 \pm 0.69$ & $65.65$ \\
10 & $85.33 \pm 1.83$
   & $30.00 \pm 3.33$ & $52.43 \pm 0.81$ & $93.20 \pm 0.32$ & $66.11 \pm 0.51$ & $65.41$ \\
15 & $89.33 \pm 1.49$
   & $34.67 \pm 9.89$ & $53.68 \pm 0.78$ & $93.52 \pm 0.30$ & $66.85 \pm 0.46$ & $67.61$ \\
\bottomrule
\end{tabular}
\end{table*}

\begin{table*}[t]
\centering
\small
\setlength{\tabcolsep}{4pt}
\caption{Full ablation over self-guidance coefficient $\beta_G$ on Qwen3-8B at step 180. The verifier coefficient is held fixed at the default value $\eta_A=15$. Entries are percentages reported as mean $\pm$ sample standard deviation over five seeds. ``Avg.'' averages the five benchmark means.}
\label{tab:app-ablation-beta-t-full}
\begin{tabular}{l c cccc c}
\toprule
$\beta_G$ & AIME24@16 & HMMT25@1 & Minerva@1 & MATH500@1 & Olympiad@1 & Avg. \\
\midrule
1 & $89.33 \pm 1.49$
  & $34.67 \pm 9.89$ & $53.68 \pm 0.78$ & $93.52 \pm 0.30$ & $66.85 \pm 0.46$ & $67.61$ \\
2 & $87.33 \pm 1.49$
  & $32.00 \pm 5.06$ & $53.68 \pm 1.40$ & $93.64 \pm 0.33$ & $65.73 \pm 1.16$ & $66.48$ \\
3 & $86.00 \pm 1.49$
  & $30.00 \pm 4.08$ & $53.46 \pm 1.39$ & $93.56 \pm 0.55$ & $66.74 \pm 0.66$ & $65.95$ \\
\bottomrule
\end{tabular}
\end{table*}

\subsection{Semantic Diversity Diagnostic}
\label{app:aime24-multisample-details}
The diversity study in Section~\ref{sec:diversity-study} evaluates semantic strategy variation on AIME24 using the step-180 checkpoints of GRPO, RLSD, and FlowBalance. We use seed $0$ only, decode $16$ complete responses for each of the $30$ problems and each method, and submit all $30\times3\times16=1440$ full trajectories to a GPT-5.5 judge. No response is truncated or heuristically compressed before judging. These results are therefore a controlled diagnostic of strategy variation at one checkpoint and one seed, not a multi-seed stability estimate.

\paragraph{Two-stage LLM-judge protocol.}
The judge first performs strategy extraction independently for each full trajectory. The extracted fields include the primary solution strategy, central mathematical representation, main tools or theorems, a short reasoning outline, a compact strategy signature, and, for incorrect responses, the attempted failure mode. The instruction is to describe the strategy actually used by the trajectory, not to repair incorrect reasoning. Differences in variable names, wording, notation, response length, arithmetic detail, or the order of equivalent calculations are not counted as different strategies.

In the second stage, for each problem and method, the $16$ extracted strategy summaries are anonymized and randomly ordered. The judge assigns every trajectory to exactly one semantic strategy cluster. During clustering, the judge does not see the source algorithm, checkpoint name, correctness label, or original response order. Correctness labels are applied only after clustering, and the reported diversity metric restricts the resulting cluster assignments to correct trajectories.

\paragraph{Metric.}
For a fixed problem and method, restrict the judged cluster assignments to correct trajectories and let $p_1,\ldots,p_K$ denote the proportions of those trajectories in the resulting semantic strategy clusters. We report correct-only Simpson strategy diversity,
\begin{equation}
    D_{\mathrm{Simpson}}
    =
    1-\sum_{k=1}^{K}p_k^2,
\end{equation}
which is the probability that two randomly sampled correct trajectories use different semantic solution strategies. A problem is included in the aggregate only when it has at least two correct trajectories.

\paragraph{Scope.}
LLM-based clustering is inherently partly subjective, even with anonymization and a two-stage protocol. The main-text conclusion is therefore restricted to this AIME24 seed-$0$ diagnostic: FlowBalance exhibits higher judged semantic strategy diversity among correct trajectories at this checkpoint.

\subsection{Prompt for Semantic Strategy Clustering}
\label{app:diversity-evaluation-prompt}

The LLM-judged diagnostic uses two prompts: one extracts a strategy summary from each trajectory, and the other clusters anonymized summaries within the same problem and method.

\begin{tcolorbox}[
    colback=gray!3,
    colframe=black!55,
    boxrule=0.5pt,
    arc=1.5pt,
    left=7pt,
    right=7pt,
    top=6pt,
    bottom=6pt,
    title={\textbf{Prompt 1: Per-trajectory strategy extraction}},
    fonttitle=\small,
    coltitle=black,
    colbacktitle=gray!12,
    before skip=0.5em,
    after skip=0.8em
]
\footnotesize
Given the problem statement and one complete reasoning trajectory, describe the mathematical strategy actually attempted by the trajectory. Do not repair incorrect reasoning, replace the attempted method with a cleaner solution, or infer a method not present in the text.

Extract: (i) the primary solution strategy, (ii) the central mathematical representation, (iii) the main tools or theorems, (iv) a short reasoning outline, (v) a compact strategy signature, and (vi) for incorrect responses, the attempted failure mode.

Treat changes in notation, variable names, wording, response length, arithmetic detail, or the order of equivalent calculations as the same strategy. Treat substantively different mathematical objects or tools---for example, rectangular-box embedding versus Cayley--Menger determinant, coordinate versus synthetic geometry, generating functions versus direct counting, or roots-of-unity methods versus real polynomial methods---as different strategies.
\end{tcolorbox}

\begin{tcolorbox}[
    colback=gray!3,
    colframe=black!55,
    boxrule=0.5pt,
    arc=1.5pt,
    left=7pt,
    right=7pt,
    top=6pt,
    bottom=6pt,
    title={\textbf{Prompt 2: Within-problem strategy clustering}},
    fonttitle=\small,
    coltitle=black,
    colbacktitle=gray!12,
    before skip=0.2em,
    after skip=0.5em
]
\footnotesize
Given 16 anonymized strategy summaries for the same problem and method, assign every trajectory to exactly one semantic strategy cluster. The summaries are randomly ordered and do not reveal the source algorithm, checkpoint name, correctness label, or original response order.

Cluster by the core mathematical representation and main tools, not by surface wording, formatting, or response length. Incorrect but coherent attempts should still be clustered by their attempted strategy; correctness labels are applied only after clustering when computing the correct-only diversity metric.

Return the cluster assignment for each anonymous trajectory, a short cluster name, and a brief rationale for each cluster.
\end{tcolorbox}

\subsection{Why the FlowBalance Traces Are Interesting}
\label{app:additional-diversity-cases}

The main text shows AIME24 Problem 23 as a compact illustration of the semantic distinctions used by the LLM-judged diversity diagnostic. Here we provide three additional FlowBalance cases from the same seed-$0$, step-$180$, $16$-sample AIME24 run. Each case contains two representative correct trajectories whose final answers agree but whose central mathematical objects differ; the boxes mark the strategy-defining reasoning actions, following the format of Table~\ref{tab:aime24-case-study}.

\newcommand{\caseboxgray}[1]{\fcolorbox{black!35}{white}{\begin{minipage}[t]{\dimexpr\linewidth-2\fboxsep-2\fboxrule\relax}\strut #1\strut\end{minipage}}}
\newcommand{\caseboxblue}[1]{\fcolorbox{blue!45!black}{white}{\begin{minipage}[t]{\dimexpr\linewidth-2\fboxsep-2\fboxrule\relax}\strut #1\strut\end{minipage}}}
\newcommand{\casestep}[1]{\caseboxgray{#1}\\[1.5pt]}
\newcommand{\casebstep}[1]{\caseboxblue{#1}\\[1.5pt]}

Together with the AIME24 Problem 23 example in the main text, these cases show that FlowBalance's additional correct trajectories are not merely longer, more verbose, or differently worded variants of a standard template. We regard a trace as \emph{interesting} when the boxed steps change the mathematical object or criterion that drives the solution: for example, replacing a local algebraic test with a global envelope argument, replacing a three-dimensional geometry problem with a two-dimensional cross-section reduction, or replacing direct elimination with a parameterized conic argument. In other words, the diversity signal is intended to capture a semantic change in \emph{how} the problem is solved, not a superficial change in phrasing.

On Problem 5, FlowBalance finds both a global envelope view of the unit-intercept line family and a local multiplicity view. The first identifies the special point as the tangency point on the astroid $x^{2/3}+y^{2/3}=1$, while the second detects uniqueness by forcing the known line $AB$ to be a double root of a one-variable trigonometric equation. On Problem 15, one FlowBalance trajectory reduces the three-dimensional torus-sphere contact to two circle tangencies in a meridian plane. Another keeps the problem as an implicit surface calculation and derives the contact radii from collinear normals in cylindrical coordinates. On Problem 27, FlowBalance again covers two different representations of the same constraint: direct coordinate elimination in squared variables, and a secant-tangent hyperbola parameterization that turns the rhombus condition into the trigonometric product $\sin\theta\sin\phi=-5/6$. These examples mirror the main-text box-embedding case and help explain why the FlowBalance traces are meaningful: the model is spreading probability mass across genuinely different correct derivations, not just stylistic rewrites of one template.

\paragraph{How to read the traces.} The boxed lines mark only the reasoning actions that define the route; standard algebra, routine simplifications, and repeated bookkeeping are suppressed with ``$\cdots$''. The goal is not to claim that every token sequence differs globally, but to show that the high-level derivational backbone differs in a way that would matter to a human reader choosing between strategies.

\begin{table}[h]
\centering
\scriptsize
\setlength{\tabcolsep}{5pt}
\renewcommand{\arraystretch}{1.18}
\caption{\textbf{Additional case study on AIME24 Problem 5.} Two correct FlowBalance trajectories solve the same unit-intercept-segment problem using different criteria for uniqueness.}
\label{tab:aime24-case-05}
\begin{tabularx}{\textwidth}{p{0.12\textwidth}X}
\toprule
& \centering\textbf{Content (boxed = reasoning actions; ``$\cdots$'' = omitted)}\tabularnewline
\midrule
\textbf{Question}
&
Let $O=(0,0)$, $A=(1/2,0)$, and $B=(0,\sqrt3/2)$. Among all unit-length segments with one endpoint on the positive $x$-axis and one endpoint on the positive $y$-axis, there is a unique interior point $C$ of $AB$ that lies on no other such segment. If $OC^2=p/q$ in lowest terms, find $p+q$.
\tabularnewline
\midrule
\rowcolor{black!5}
\textbf{Route 1}
&
\begin{minipage}[t]{\linewidth}
\textbf{Envelope / astroid route.} ``$\cdots$''\\[1.5pt]
\casestep{$F(x,y,u)=x/u+y/\sqrt{1-u^2}-1$ for the unit-intercept line family.}
\casestep{Solve $F=0$ and $F_u=0$ to obtain the envelope.}
\casestep{$x=u^3$, $y=(1-u^2)^{3/2}$, the first-quadrant astroid.}
\casestep{$AB$ corresponds to $u=1/2$, so $C=(1/8,3\sqrt3/8)$.}
\caseboxgray{$OC^2=7/16$, hence $p+q=23$.}
\end{minipage}
\tabularnewline
\midrule
\rowcolor{blue!10}
\textbf{Route 2}
&
\begin{minipage}[t]{\linewidth}
\textbf{Multiple-root uniqueness route.} ``$\cdots$''\\[1.5pt]
\casebstep{Write an interior point as $C(t)=((1-t)/2,\sqrt3t/2)$.}
\casebstep{A unit-intercept line through $C(t)$ satisfies $(1-t)/(2\cos\theta)+\sqrt3t/(2\sin\theta)=1$.}
\casebstep{The segment $AB$ is the known solution $\theta=\pi/3$.}
\casebstep{Uniqueness at the boundary forces this solution to be a double root.}
\casebstep{$f'(\pi/3)=\sqrt3(1-t)-\sqrt3t/3=0$, so $t=3/4$.}
\caseboxblue{$C=(1/8,3\sqrt3/8)$ and $p+q=23$.}
\end{minipage}
\tabularnewline
\bottomrule
\end{tabularx}
\end{table}

\begin{table}[h]
\centering
\scriptsize
\setlength{\tabcolsep}{5pt}
\renewcommand{\arraystretch}{1.18}
\caption{\textbf{Additional case study on AIME24 Problem 15.} Two correct FlowBalance trajectories use either a two-dimensional meridian section or a three-dimensional implicit-normal condition.}
\label{tab:aime24-case-15}
\begin{tabularx}{\textwidth}{p{0.12\textwidth}X}
\toprule
& \centering\textbf{Content (boxed = reasoning actions; ``$\cdots$'' = omitted)}\tabularnewline
\midrule
\textbf{Question}
&
A torus is formed by rotating a circle of radius $3$ about an axis in its plane at distance $6$ from the circle's center. A sphere of radius $11$ is tangent to the torus in two rotationally symmetric positions, with corresponding contact-circle radii $r_i$ and $r_o$. If $r_i-r_o=m/n$ in lowest terms, find $m+n$.
\tabularnewline
\midrule
\rowcolor{black!5}
\textbf{Route 1}
&
\begin{minipage}[t]{\linewidth}
\textbf{Meridian-section circle-tangency route.} ``$\cdots$''\\[1.5pt]
\casestep{Take a plane through the rotation axis; the torus cross-section is a circle of radius $3$ centered at $(6,0)$.}
\casestep{The sphere cross-section is a circle of radius $11$ centered at $(0,h)$.}
\casestep{The two tangencies have center distance $11-3=8$ or $11+3=14$.}
\casestep{The contact-circle radius is the horizontal coordinate of the tangency point.}
\casestep{$r_i=11\cdot6/8=33/4$ and $r_o=11\cdot6/14=33/7$.}
\caseboxgray{$r_i-r_o=99/28$, so $m+n=127$.}
\end{minipage}
\tabularnewline
\midrule
\rowcolor{blue!10}
\textbf{Route 2}
&
\begin{minipage}[t]{\linewidth}
\textbf{Implicit-surface normal route.} ``$\cdots$''\\[1.5pt]
\casebstep{Use cylindrical coordinates $(\rho,z)$.}
\casebstep{Torus: $(\rho-6)^2+z^2=9$; sphere: $\rho^2+(z-h)^2=121$.}
\casebstep{Tangency means normals are collinear: $(\rho-6,z)=\lambda(\rho,z-h)$.}
\casebstep{Eliminate $z,h$ to get $\rho/|\rho-6|=11/3$.}
\casebstep{$\rho>6$ gives $\rho=33/4$; $\rho<6$ gives $\rho=33/7$.}
\caseboxblue{$m+n=127$.}
\end{minipage}
\tabularnewline
\bottomrule
\end{tabularx}
\end{table}

\begin{table}[h]
\centering
\scriptsize
\setlength{\tabcolsep}{5pt}
\renewcommand{\arraystretch}{1.18}
\caption{\textbf{Additional case study on AIME24 Problem 27.} Two correct FlowBalance trajectories optimize the same diagonal length using algebraic elimination or a trigonometric hyperbola parameterization.}
\label{tab:aime24-case-27}
\begin{tabularx}{\textwidth}{p{0.12\textwidth}X}
\toprule
& \centering\textbf{Content (boxed = reasoning actions; ``$\cdots$'' = omitted)}\tabularnewline
\midrule
\textbf{Question}
&
Points $A,B,C,D$ lie on the hyperbola $x^2/20-y^2/24=1$ and form a rhombus whose diagonals intersect at the origin. Find the greatest real number that is strictly less than $BD^2$ for every such rhombus.
\tabularnewline
\midrule
\rowcolor{black!5}
\textbf{Route 1}
&
\begin{minipage}[t]{\linewidth}
\textbf{Coordinate-elimination route.} ``$\cdots$''\\[1.5pt]
\casestep{Write opposite vertices as $(a,b),(-a,-b)$ and $(c,d),(-c,-d)$.}
\casestep{A centered parallelogram is a rhombus iff its diagonals are perpendicular, so $ac+bd=0$.}
\casestep{Set $x=a^2$ and use $b^2=6x/5-24$.}
\casestep{From perpendicularity, $c^2=(b^2/a^2)d^2$.}
\casestep{The hyperbola equation for $B$ gives $d^2=600x/(11x-720)$.}
\caseboxgray{$BD^2=4(c^2+d^2)=480+288000/(11x-720)>480$, with limit $480$ from above.}
\end{minipage}
\tabularnewline
\midrule
\rowcolor{blue!10}
\textbf{Route 2}
&
\begin{minipage}[t]{\linewidth}
\textbf{$\sec$--$\tan$ parameterization route.} ``$\cdots$''\\[1.5pt]
\casebstep{Parameterize the hyperbola as $(2\sqrt5\sec\theta,2\sqrt6\tan\theta)$.}
\casebstep{Take $A$ with parameter $\theta$ and $B$ with parameter $\phi$.}
\casebstep{Perpendicularity gives $20\sec\theta\sec\phi+24\tan\theta\tan\phi=0$.}
\casebstep{Equivalently, $\sin\theta\sin\phi=-5/6$.}
\casebstep{Since $|\sin\theta|<1$, we have $|\sin\phi|>5/6$, hence $\tan^2\phi>25/11$.}
\caseboxblue{$BD^2=80+176\tan^2\phi>480$, with limit $480$ attainable only at infinity.}
\end{minipage}
\tabularnewline
\bottomrule
\end{tabularx}
\end{table}

%% file: sections/sign_gated_numerical_analysis.tex
\subsection{Exact Numerical Diagnostics for Verifier-Grounded Self-Improvement}
\label{sec:sign-gated-numerics}

We instantiate the unchanged FlowBalance target on exactly enumerable response spaces. Every distribution below is obtained by normalizing
\begin{equation}
    q(y)
    \propto
    \rho(y)
    \exp\!\left(
      \frac{\eta A(y)+\beta G(y)\sign(A(y))}{\tau}
    \right),
    \label{eq:numerical-exact-target}
\end{equation}
with no surrogate objective or alternative gate. These are qualitative diagnostics of the target, not additional language-model training runs. We organize them as a short narrative. The first plot gives the basic geometric intuition for verifier-grounded self-guidance. The second isolates the exact anti-self-confirmation effect in the simplest mixed-outcome group. The third maps the reliability boundary of self-guidance. The fourth returns to the two structural properties emphasized by the theory: conservative reference change and efficient use of all within-group contrasts.

\subsubsection{Mechanism View: Grounded Self-Guidance across Multiple Modes}

Figure~\ref{fig:flowbalance-gaussian-geometry} embeds four exact response modes as one-dimensional Gaussians: two failures and two successes. The verifier cannot distinguish the two successful modes, while privileged hindsight assigns its largest gain to the more robust success. It also assigns positive local likelihood to plausible failures, which is exactly the false-positive regime addressed by Proposition~\ref{prop:sign-gate-pairwise}. We use
\begin{equation}
    \rho=(0.30,0.20,0.30,0.20),\quad
    A=(-1,-1,+1,+1),\quad
    G=(0.45,0.25,0.30,0.75),
\end{equation}
with $(\eta,\beta,\tau)=(0.75,0.80,1)$. Reward-only shaping raises total success mass to $0.818$ but cannot prefer the robust successful mode. Ungated self-guidance identifies that mode, yet also reinforces positive guidance on failures, reaching success mass $0.832$. FlowBalance preserves the useful ranking among successes while reversing the failed-response contribution, reaching success mass $0.900$ and robust-success mass $0.440$. The Gaussian curves visualize exact categorical masses; they do not approximate the target.

\begin{figure}[H]
    \centering
    \includegraphics[width=0.96\linewidth]{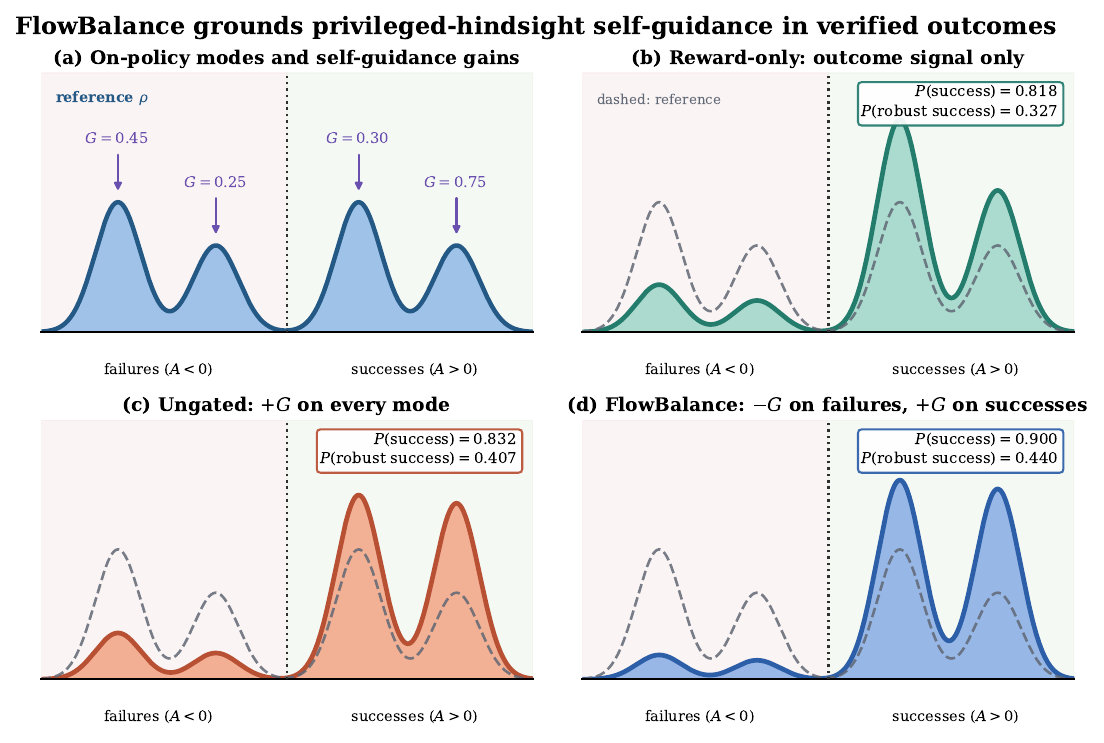}
    \caption{\textbf{Mechanism view of verifier-grounded self-guidance.} FlowBalance uses $+G$ on positive-advantage responses and $-G$ on negative-advantage responses. It retains useful guidance among successful modes without converting positive local confidence on failed responses into self-reinforcement.}
    \label{fig:flowbalance-gaussian-geometry}
\end{figure}

\subsubsection{Pairwise View: Exact Anti-Self-Confirmation in a Mixed Group}

Figure~\ref{fig:flowbalance-sign-gate-phase} fixes positive self-guidance $G_+>0$ on a successful response and sweeps positive false-support $G_->0$ on a plausible failure. Proposition~\ref{prop:sign-gate-pairwise} gives the exact log-odds advantages
\begin{equation}
    \logit p_{\mathrm{FlowBalance}}-\logit p_{\mathrm{reward}}
    =\frac{\beta(G_++G_-)}{\tau}>0,
    \qquad
    \logit p_{\mathrm{FlowBalance}}-\logit p_{\mathrm{ungated}}
    =\frac{2\beta G_-}{\tau}>0.
    \label{eq:numerical-dominance-margins}
\end{equation}
Thus FlowBalance dominates both baselines throughout this useful-but-imperfect quadrant. As false-positive support grows, ungated shaping becomes less reward-aligned, whereas outcome calibration becomes more selective. At $G_-=0.5$, the exact target success probabilities are $0.894$ for FlowBalance, $0.817$ for reward-only shaping, and $0.807$ for ungated shaping.

\begin{figure}[H]
    \centering
    \includegraphics[width=0.96\linewidth]{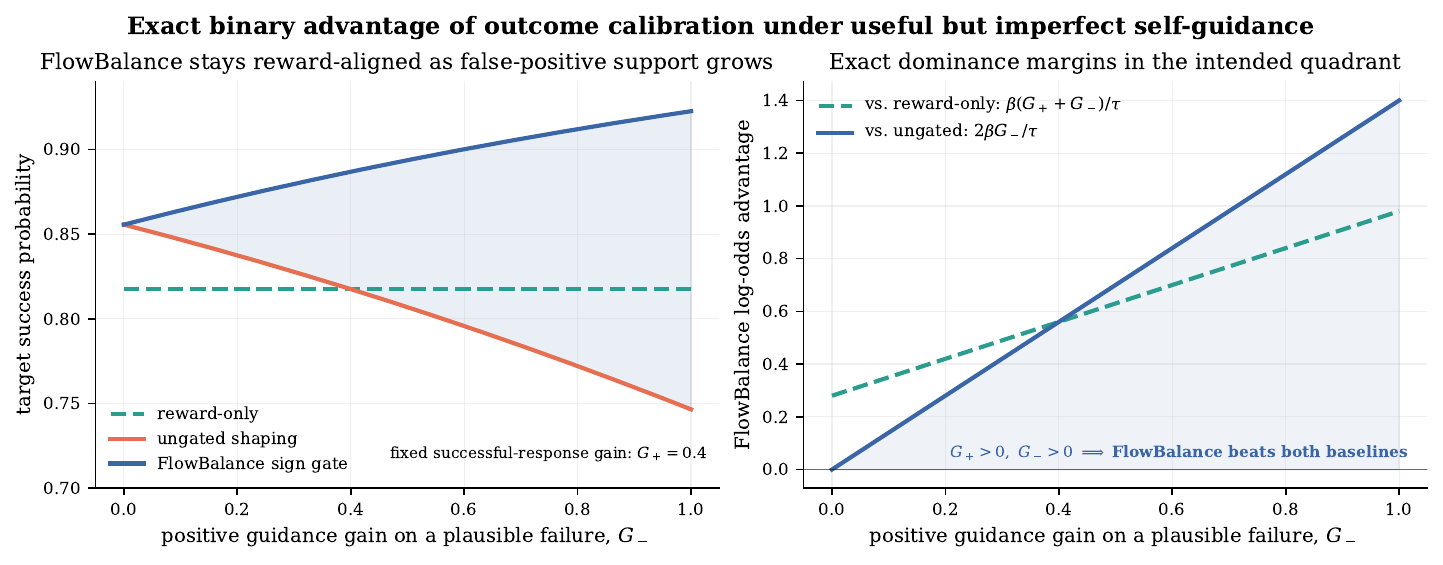}
    \caption{\textbf{Exact anti-self-confirmation advantage.} The left panel shows target success probability as positive self-guidance on a failed response increases. The right panel plots the closed-form probability-ratio margins in Eq.~\eqref{eq:numerical-dominance-margins}; no fitted decision boundary is used.}
    \label{fig:flowbalance-sign-gate-phase}
\end{figure}

\subsubsection{Boundary View: When Self-Guidance Helps and When It Stops Helping}

Outcome calibration does not make arbitrary guidance safe. Figure~\ref{fig:flowbalance-self-guidance-reliability} makes this boundary explicit on a five-mode response space. We interpolate the guidance vector as $G_q=qG_{\mathrm{useful}}+(1-q)G_{\mathrm{adv}}$, where $q=1$ is useful but still false-positive on failures and $q=0$ is adversarial to the verifier. We then sweep both $q$ and $\beta_G/\tau$. Above a reliability threshold, FlowBalance improves verified-success mass over reward-only shaping while retaining several successful modes. Below that threshold, stronger guidance can reduce verified success. The reverse-KL panel quantifies the accompanying movement from the reference. This diagnostic therefore supports a bounded claim: sign gating corrects false-positive confidence on failures, but it does not justify arbitrarily strong guidance that is systematically wrong on successful responses.

\begin{figure}[H]
    \centering
    \includegraphics[width=0.98\linewidth]{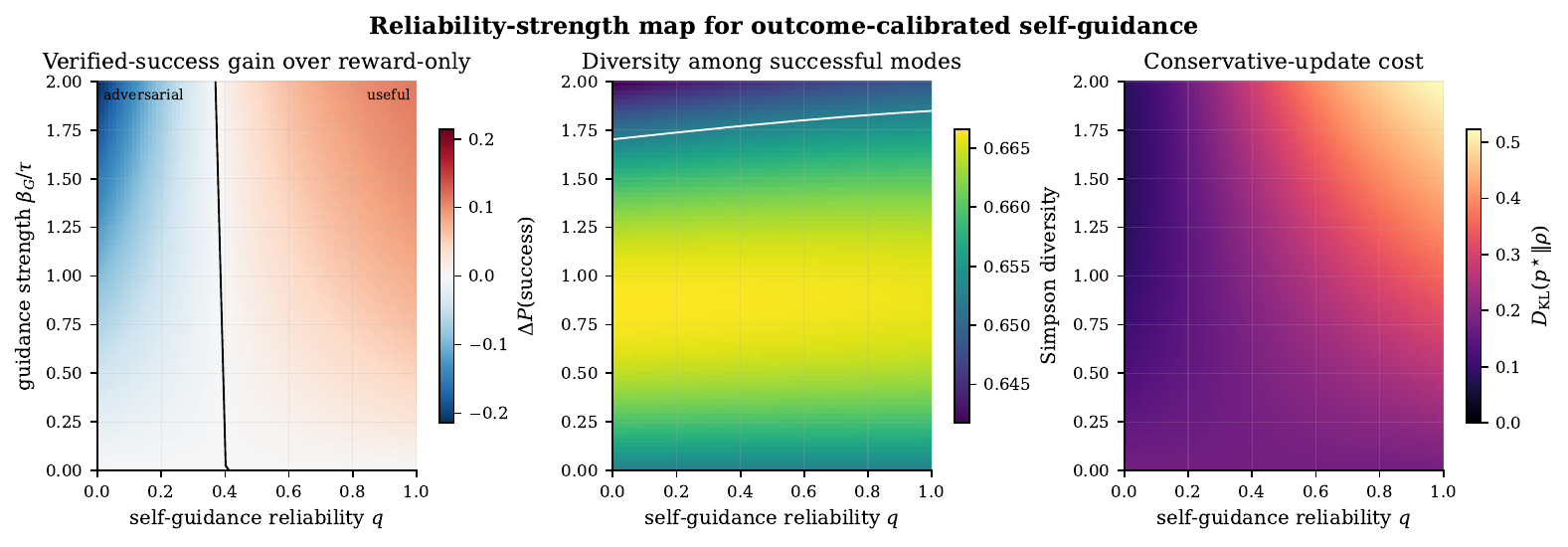}
    \caption{\textbf{Reliability--strength map for self-guidance.} Left: gain in verified-success mass over reward-only shaping. Middle: conditional Simpson diversity among successful modes; the white contour marks the reward-only level. Right: reverse KL to the reference. The reliability parameter is a synthetic interpolation, not an empirical calibration estimate.}
    \label{fig:flowbalance-self-guidance-reliability}
\end{figure}

\subsubsection{Structural View: Conservative Change and All-Contrast Efficiency}

Figure~\ref{fig:flowbalance-compact-theory-diagnostics} verifies two distributional consequences of the FlowBalance formulation. First, the exponential-tilt path traces the minimum reverse-KL frontier as expected sign-gated energy increases. A constructed full-support fixed-data target is matched to the exact FlowBalance energy, but requires reverse KL $0.973$ instead of $0.273$, a $3.6\times$ larger displacement from the reference. Second, a local Gaussian calculation shows that profiling the group partition intercept retains all $N-1$ within-group contrast directions. At group size $N=32$, its exact parameter risk is $2.92\%$ of a one-contrast-per-group estimator, approximately $34\times$ lower risk. Together, these diagnostics explain why the self-improvement update is both conservative in where it moves probability mass and statistically efficient in how it uses each rollout group.

\begin{figure}[H]
    \centering
    \includegraphics[width=0.96\linewidth]{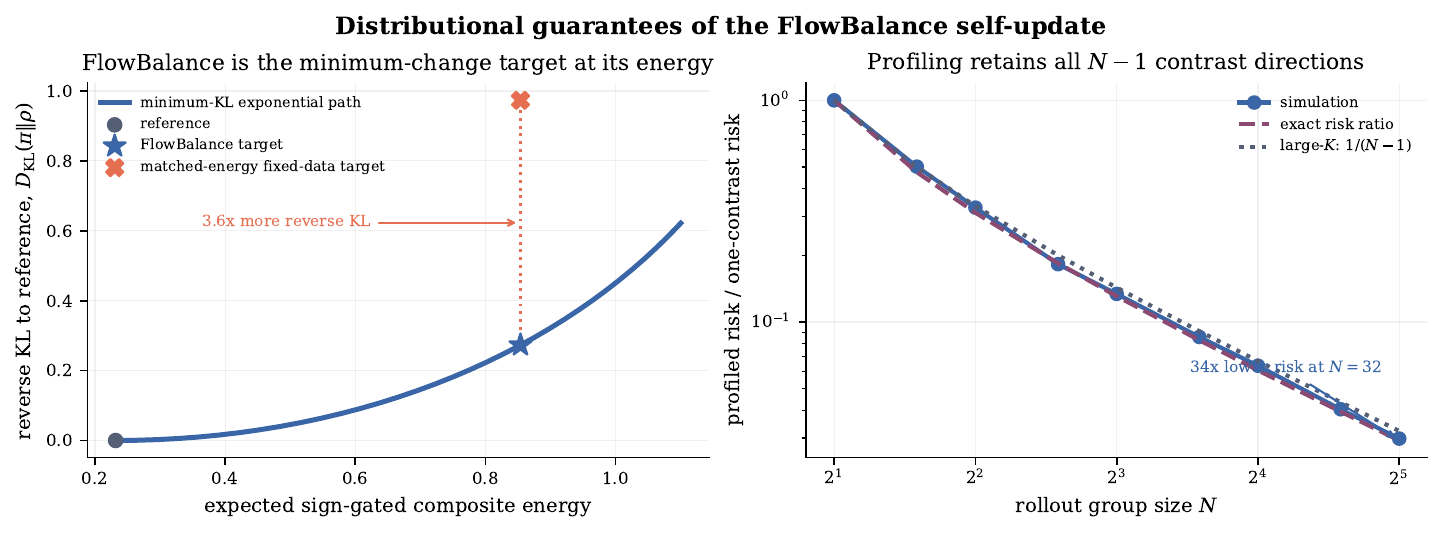}
    \caption{\textbf{Conservative change and all-contrast fitting.} Left: FlowBalance is the minimum reverse-KL target at its attained composite-energy level. Right: profiling one nuisance intercept per rollout group preserves all $N-1$ contrast directions and yields the exact local Gaussian risk reduction.}
    \label{fig:flowbalance-compact-theory-diagnostics}
\end{figure}

%% file: paper.bib
@STRING{TIP = "IEEE Trans. Image Process." }

@STRING{ICML  = "Int. Conf. Mach. Learn."}

@string(AAAI  = "{AAAI Conf. Art. Intell.}")

@misc{zhao2026opsd,
  title        = {Self-Distilled Reasoner: On-Policy Self-Distillation for Large Language Models},
  author       = {Zhao, Siyan and Xie, Zhihui and Liu, Mengchen and Huang, Jing and Pang, Guan and Chen, Feiyu and Grover, Aditya},
  year         = {2026},
  eprint       = {2601.18734},
  archivePrefix= {arXiv},
  primaryClass = {cs.CL}
}

@misc{hubotter2026sdpo,
  title        = {Reinforcement Learning via Self-Distillation},
  author       = {H{\"u}botter, Jonas and L{\"u}beck, Frederike and Behric, Lejs and Baumann, Anton and Bagatella, Marco and Marta, Daniel and Hakimi, Ido and Shenfeld, Idan and Kleine Buening, Thomas and Guestrin, Carlos and Krause, Andreas},
  year         = {2026},
  eprint       = {2601.20802},
  archivePrefix= {arXiv},
  primaryClass = {cs.LG}
}

@misc{shenfeld2026sdft,
  title        = {Self-Distillation Enables Continual Learning},
  author       = {Shenfeld, Idan and Damani, Mehul and H{\"u}botter, Jonas and Agrawal, Pulkit},
  year         = {2026},
  eprint       = {2601.19897},
  archivePrefix= {arXiv},
  primaryClass = {cs.LG}
}

@misc{kim2026degradation,
  title        = {Why Does Self-Distillation (Sometimes) Degrade the Reasoning Capability of {LLM}s?},
  author       = {Kim, Jeonghye and Luo, Xufang and Kim, Minbeom and Lee, Sangmook and Kim, Dohyung and Jeon, Jiwon and Li, Dongsheng and Yang, Yuqing},
  year         = {2026},
  eprint       = {2603.24472},
  archivePrefix= {arXiv},
  primaryClass = {cs.CL}
}

@misc{yu2026dopd,
  title        = {{DOPD}: Dual On-policy Distillation},
  author       = {Yu, Xinlei and Li, Gen and Si, Qingyi and Zhang, Guibin and Xu, Yuqi and Wang, Congcong and Dong, Shuai and Tuo, Kaiwen and Zeng, Xiangyu and Feng, Kaituo and Wang, Qunzhong and Shi, Yang and Hu, Xiaobin and Yue, Xiangyu and Wang, Jiaqi and Yan, Shuicheng},
  year         = {2026},
  eprint       = {2606.30626},
  archivePrefix= {arXiv},
  primaryClass = {cs.CL}
}

@misc{zhu2026manyfaces,
  title        = {The Many Faces of On-Policy Distillation: Pitfalls, Mechanisms, and Fixes},
  author       = {Zhu, Siqi and Ye, Xuyan and Lu, Hongyu and Shi, Weiye and Liu, Ge},
  year         = {2026},
  eprint       = {2605.11182},
  archivePrefix= {arXiv},
  primaryClass = {cs.CL}
}

@misc{zhao2026rosd,
  title        = {Reflective On-Policy Self-Distillation for Language Model Reasoning across Domains},
  author       = {Zhao, Ziqi and Ma, Xinyu and Yang, Liu and Feng, Yujie and Shi, Daiting and He, Jingzhou and Xin, Xin and Ren, Zhaochun and Wu, Xiao-Ming},
  year         = {2026},
  eprint       = {2605.28014},
  archivePrefix= {arXiv},
  primaryClass = {cs.CL}
}

@misc{zhang2026dasd,
  title        = {Tailoring Teaching to Aptitude: Direction-Adaptive Self-Distillation for LLM Reasoning},
  author       = {Zhang, Hongbin and Wang, Chaozheng and Chen, Kehai and Pan, Youcheng and Xiang, Yang and Wang, Jinpeng and Zhang, Min},
  year         = {2026},
  eprint       = {2605.22263},
  archivePrefix= {arXiv},
  primaryClass = {cs.CL}
}

@misc{kim2026compresses,
  title        = {{OPSD} Compresses What {RLVR} Teaches: A Post-{RL} Compaction Stage for Reasoning Models},
  author       = {Kim, Jaehoon and Lee, Dongha},
  year         = {2026},
  eprint       = {2605.06188},
  archivePrefix= {arXiv},
  primaryClass = {cs.CL}
}

@misc{jia2026aopd,
  title        = {Asymmetric On-Policy Distillation: Bridging Exploitation and Imitation at the Token Level},
  author       = {Jia, Nan and Yang, Haojin and Ma, Xing and Lian, Jiesong and Zhang, Shuailiang and Zhang, Weipeng and Zeng, Ke and Cai, Xunliang and Sun, Zequn},
  year         = {2026},
  eprint       = {2605.06387},
  archivePrefix= {arXiv},
  primaryClass = {cs.CL}
}

@misc{zhu2025flowrl,
  title        = {{FlowRL}: Matching Reward Distributions for {LLM} Reasoning},
  author       = {Zhu, Xuekai and Cheng, Daixuan and Zhang, Dinghuai and Li, Hengli and Zhang, Kaiyan and Jiang, Che and Sun, Youbang and Hua, Ermo and Zuo, Yuxin and Lv, Xingtai and Zhang, Qizheng and Chen, Lin and Shao, Fanghao and Xue, Bo and Song, Yunchong and Yang, Zhenjie and Cui, Ganqu and Ding, Ning and Gao, Jianfeng and Liu, Xiaodong and Zhou, Bowen and Mei, Hongyuan and Lin, Zhouhan},
  year         = {2025},
  eprint       = {2509.15207},
  archivePrefix= {arXiv},
  primaryClass = {cs.LG}
}

@misc{yu2025dapo,
  title        = {{DAPO}: An Open-Source {LLM} Reinforcement Learning System at Scale},
  author       = {Yu, Qiying and Zhang, Zheng and Zhu, Ruofei and Yuan, Yufeng and Zuo, Xiaochen and Yue, Yu and Fan, Tiantian and Liu, Gaohong and Liu, Lingjun and Liu, Xin and Lin, Haibin and Lin, Zhiqi and Ma, Bole and Sheng, Guangming and Tong, Yuxuan and Zhang, Chi and Zhang, Mofan and Zhang, Wang and Zhu, Hang and Zhu, Jinhua and Chen, Jiaze and Chen, Jiangjie and Wang, Chengyi and Li, Hongli and Dai, Weinan and Song, Yuxuan and Wei, Xiangpeng and Zhou, Hao and Liu, Jingjing and Ma, Wei-Ying and Zhang, Ya-Qin and Lin, Yan and Qiao, Mu and Wu, Yonghui and Wang, Mingxuan},
  year         = {2025},
  eprint       = {2503.14476},
  archivePrefix= {arXiv},
  primaryClass = {cs.CL}
}

@misc{shao2024deepseekmath,
  title        = {{DeepSeekMath}: Pushing the Limits of Mathematical Reasoning in Open Language Models},
  author       = {Shao, Zhihong and Wang, Peiyi and Zhu, Qihao and Xu, Runxin and Song, Junxiao and Bi, Xiao and Zhang, Haowei and Zhang, Mingchuan and Li, Y. K. and Wu, Y. and Guo, Daya},
  year         = {2024},
  eprint       = {2402.03300},
  archivePrefix= {arXiv},
  primaryClass = {cs.CL}
}

@misc{kim2026rebellious,
  title        = {Rebellious Student: Reversing Teacher Signals for Reasoning Exploration with Self-Distilled {RLVR}},
  author       = {Kim, Jeonghye and Jeon, Jiwon and Li, Dongsheng and Yang, Yuqing},
  year         = {2026},
  eprint       = {2605.10781},
  archivePrefix= {arXiv},
  primaryClass = {cs.CL}
}

@misc{xu2026tip,
  title        = {{TIP}: Token Importance in On-Policy Distillation},
  author       = {Xu, Yuanda and Sang, Hejian and Zhou, Zhengze and He, Ran and Wang, Zhipeng and Geramifard, Alborz},
  year         = {2026},
  eprint       = {2604.14084},
  archivePrefix= {arXiv},
  primaryClass = {cs.LG}
}

@misc{yang2025qwen3,
  title        = {{Qwen3} Technical Report},
  author       = {Yang, An and Li, Anfeng and Yang, Baosong and Zhang, Beichen and Hui, Binyuan and Zheng, Bo and Yu, Bowen and Gao, Chang and Huang, Chengen and Lv, Chenxu and Zheng, Chujie and Liu, Dayiheng and Zhou, Fan and Huang, Fei and Hu, Feng and Ge, Hao and Wei, Haoran and Lin, Huan and Tang, Jialong and Yang, Jian and Tu, Jianhong and Zhang, Jianwei and Yang, Kexin and Yu, Le and Wang, Xinyu and Zhou, Jing and Qwen Team},
  year         = {2025},
  eprint       = {2505.09388},
  archivePrefix= {arXiv},
  primaryClass = {cs.CL}
}

@misc{yang2026rlsd,
  title         = {Self-Distilled {RLVR}},
  author        = {Yang, Chenxu and Qin, Chuanyu and Si, Qingyi and Chen, Minghui and Gu, Naibin and Yao, Dingyu and Lin, Zheng and Wang, Weiping and Wang, Jiaqi and Duan, Nan},
  year          = {2026},
  eprint        = {2604.03128},
  archivePrefix = {arXiv},
  primaryClass  = {cs.LG}
}

@misc{pan2026rlcsd,
  title         = {{RLCSD}: Reinforcement Learning with Contrastive On-Policy Self-Distillation},
  author        = {Pan, Leyi and Tao, Shuchang and Zhai, Yunpeng and Zhang, Shun and Li, Wenzhe and Liu, Xiao and Wang, Jibin and Shen, Jian and Tu, Zhaopeng and Zhao, Kaisheng},
  year          = {2026},
  eprint        = {2606.11709},
  archivePrefix = {arXiv},
  primaryClass  = {cs.CL}
}

@misc{du2026gflowrl,
  title         = {{GFlowRL}: Scaling Distribution-Matching {RL} to Large Language Models},
  author        = {Du, Li and Li, Kanglong and Cao, Yihang and Lin, Kaiyan and Ma, Yuanqi and Lu, Yuxuan and Chen, Xi and Yang, Jinyang and Ma, Zhiyuan and Chu, Zhixuan and Huang, Xuanjing and Yu, Hui and Zhang, Jie M. and Wei, Furu and Yu, Bowen and Li, Lei},
  year          = {2026},
  eprint        = {2607.13394},
  archivePrefix = {arXiv},
  primaryClass  = {cs.CL}
}

@misc{zheng2025gspo,
  title         = {Group Sequence Policy Optimization},
  author        = {Zheng, Chujie and Liu, Shixuan and Li, Mingze and Chen, Xiong-Hui and Yu, Bowen and Gao, Chang and Dang, Kai and Liu, Yuqiong and Men, Rui and Yang, An and Zhou, Jingren and Lin, Junyang},
  year          = {2025},
  eprint        = {2507.18071},
  archivePrefix = {arXiv},
  primaryClass  = {cs.LG}
}

@misc{xie2025spo,
  title         = {Simple Policy Optimization},
  author        = {Xie, Zhengpeng and Zhang, Qiang and Yang, Fan and Hutter, Marco and Xu, Renjing},
  year          = {2025},
  eprint        = {2401.16025},
  archivePrefix = {arXiv},
  primaryClass  = {cs.LG}
}

@misc{zhang2025gvpo,
  title         = {{GVPO}: Group Variance Policy Optimization for Large Language Model Post-Training},
  author        = {Zhang, Kaichen and Hong, Yuzhong and Bao, Junwei and Jiang, Hongfei and Song, Yang and Hong, Dingqian and Xiong, Hui},
  year          = {2025},
  eprint        = {2504.19599},
  archivePrefix = {arXiv},
  primaryClass  = {cs.AI}
}

@misc{yang2025dcpo,
  title         = {{DCPO}: Dynamic Clipping Policy Optimization},
  author        = {Yang, Shihui and Dou, Chengfeng and Guo, Peidong and Lu, Kai and Ju, Qiang and Deng, Fei and Xin, Rihui},
  year          = {2025},
  eprint        = {2509.02333},
  archivePrefix = {arXiv},
  primaryClass  = {cs.CL}
}

@misc{huang2026oblr,
  title         = {Variance-Aware Baselines and Adaptive Learning Rates for Reinforcement Learning with Verifiable Rewards},
  author        = {Huang, Zixun and Sheng, Jiayi and Zheng, Zeyu},
  year          = {2026},
  eprint        = {2511.23310},
  archivePrefix = {arXiv},
  primaryClass  = {stat.ML}
}

@inproceedings{bengio2021gflownet,
  title     = {Flow Network Based Generative Models for Non-Iterative Diverse Candidate Generation},
  author    = {Bengio, Emmanuel and Jain, Moksh and Korablyov, Maksym and Precup, Doina and Bengio, Yoshua},
  booktitle = {Advances in Neural Information Processing Systems},
  volume    = {34},
  pages     = {27381--27394},
  year      = {2021}
}

@article{bengio2023gflownet,
  title   = {{GFlowNet} Foundations},
  author  = {Bengio, Yoshua and Lahlou, Salem and Deleu, Tristan and Hu, Edward J. and Tiwari, Mo and Bengio, Emmanuel},
  journal = {Journal of Machine Learning Research},
  volume  = {24},
  number  = {210},
  pages   = {1--55},
  year    = {2023}
}

@inproceedings{malkin2022trajectory,
  title     = {Trajectory Balance: Improved Credit Assignment in {GFlowNets}},
  author    = {Malkin, Nikolay and Jain, Moksh and Bengio, Emmanuel and Sun, Chen and Bengio, Yoshua},
  booktitle = {Advances in Neural Information Processing Systems},
  volume    = {35},
  pages     = {5955--5967},
  year      = {2022}
}

@misc{yu2025for,
  title         = {Flow of Reasoning: Training {LLM}s for Divergent Reasoning with Minimal Examples},
  author        = {Yu, Fangxu and Jiang, Lai and Kang, Haoqiang and Hao, Shibo and Qin, Lianhui},
  year          = {2025},
  eprint        = {2406.05673},
  archivePrefix = {arXiv},
  primaryClass  = {cs.AI},
  note          = {ICML 2025}
}

@inproceedings{bucila2006model,
  title     = {Model Compression},
  author    = {Buciluǎ, Cristian and Caruana, Rich and Niculescu-Mizil, Alexandru},
  booktitle = {Proceedings of the 12th ACM SIGKDD International Conference on Knowledge Discovery and Data Mining},
  pages     = {535--541},
  year      = {2006},
  doi       = {10.1145/1150402.1150464}
}

@misc{hinton2015distilling,
  title         = {Distilling the Knowledge in a Neural Network},
  author        = {Hinton, Geoffrey and Vinyals, Oriol and Dean, Jeff},
  year          = {2015},
  eprint        = {1503.02531},
  archivePrefix = {arXiv},
  primaryClass  = {stat.ML}
}

@inproceedings{agarwal2024onpolicy,
  title     = {On-Policy Distillation of Language Models: Learning from Self-Generated Mistakes},
  author    = {Agarwal, Rishabh and Vieillard, Nino and Zhou, Yongchao and Stanczyk, Piotr and Ramos, Sabela and Geist, Matthieu and Bachem, Olivier},
  booktitle = {International Conference on Learning Representations},
  year      = {2024}
}

@misc{penaloza2026privileged,
  title         = {Privileged Information Distillation for Language Models},
  author        = {Penaloza, Emiliano and Vattikonda, Dheeraj and Gontier, Nicolas and Lacoste, Alexandre and Charlin, Laurent and Caccia, Massimo},
  year          = {2026},
  eprint        = {2602.04942},
  archivePrefix = {arXiv},
  primaryClass  = {cs.LG}
}

@misc{ye2026opcd,
  title         = {On-Policy Context Distillation for Language Models},
  author        = {Ye, Tianzhu and Dong, Li and Wu, Xun and Huang, Shaohan and Wei, Furu},
  year          = {2026},
  eprint        = {2602.12275},
  archivePrefix = {arXiv},
  primaryClass  = {cs.CL}
}

@misc{xu2026betaopsd,
  title         = {{$\beta$-{OPSD}: Deriving with Policy Optimization, Training with Self-Distillation}},
  author        = {Xu, Haoran and Shi, Chufan and Yang, Chenyang and Li, Ming and Liu, Kaiyan and Zhang, Weinan and Wang, Jun},
  year          = {2026},
  eprint        = {2607.23787},
  archivePrefix = {arXiv},
  primaryClass  = {cs.AI}
}

@inproceedings{guan2026reasoningoverconfidence,
  title     = {Beware of Reasoning Overconfidence: Pitfalls in the Reasoning Process for Multi-solution Tasks},
  author    = {Guan, Jiannan and Chen, Qiguang and Qin, Libo and Peng, Dengyun and Liu, Jinhao and Huo, Liangyu and Xie, Jian and Che, Wanxiang},
  booktitle = {Proceedings of the AAAI Conference on Artificial Intelligence},
  volume    = {40},
  number    = {36},
  pages     = {30843--30851},
  year      = {2026},
  doi       = {10.1609/aaai.v40i36.40342}
}

@inproceedings{kujawa2025neuralalgorithmic,
  title     = {Neural Algorithmic Reasoning with Multiple Correct Solutions},
  author    = {Kujawa, Zeno and Poole, John and Georgiev, Dobrik and Numeroso, Danilo and Fleischmann, Henry and Li{\`o}, Pietro},
  booktitle = {Workshop on Machine Learning on Graphs in the Era of Generative Artificial Intelligence at KDD},
  year      = {2025},
  eprint    = {2409.06953},
  archivePrefix = {arXiv},
  primaryClass  = {cs.LG}
}

@inproceedings{ren2025learningdynamics,
  title     = {Learning Dynamics of {LLM} Finetuning},
  author    = {Ren, Yi and Sutherland, Danica J.},
  booktitle = {International Conference on Learning Representations},
  year      = {2025},
  eprint    = {2407.10490},
  archivePrefix = {arXiv},
  primaryClass  = {cs.LG}
}

@misc{zhou2025evolving,
  title         = {Evolving Language Models without Labels: Majority Drives Selection, Novelty Promotes Variation},
  author        = {Zhou, Yujun and Liang, Zhenwen and Liu, Haolin and Yu, Wenhao and Panaganti, Kishan and Song, Linfeng and Yu, Dian and Zhang, Xiangliang and Mi, Haitao and Yu, Dong},
  year          = {2025},
  eprint        = {2509.15194},
  archivePrefix = {arXiv},
  primaryClass  = {cs.LG}
}

@misc{yu2025rlev,
  title         = {Every Question Has Its Own Value: Reinforcement Learning with Explicit Human Values},
  author        = {Yu, Dian and Zhao, Yulai and Panaganti, Kishan and Song, Linfeng and Mi, Haitao and Yu, Dong},
  year          = {2025},
  eprint        = {2510.20187},
  archivePrefix = {arXiv},
  primaryClass  = {cs.LG}
}

@misc{liang2025g2rl,
  title         = {Can {LLM}s Guide Their Own Exploration? Gradient-Guided Reinforcement Learning for {LLM} Reasoning},
  author        = {Liang, Zhenwen and Lu, Sidi and Yu, Wenhao and Panaganti, Kishan and Zhou, Yujun and Mi, Haitao and Yu, Dong},
  year          = {2025},
  eprint        = {2512.15687},
  archivePrefix = {arXiv},
  primaryClass  = {cs.LG}
}

@misc{xia2026dirl,
  title         = {Reasoning or Memorization? Direction-Aware Diversity Exploration in {LLM} Reinforcement Learning},
  author        = {Xia, Jiangnan and Shi, Yucheng and Yang, Yu and Panaganti, Kishan and Liang, Zhenwen and Liu, Ninghao},
  year          = {2026},
  eprint        = {2606.10346},
  archivePrefix = {arXiv},
  primaryClass  = {cs.AI}
}

@misc{panaganti2026gdro,
  title         = {Group Distributionally Robust Optimization-Driven Reinforcement Learning for {LLM} Reasoning},
  author        = {Panaganti, Kishan and Liang, Zhenwen and Yu, Wenhao and Mi, Haitao and Yu, Dong},
  year          = {2026},
  eprint        = {2601.19280},
  archivePrefix = {arXiv},
  primaryClass  = {cs.LG}
}

@inproceedings{xu2025robustllm,
  title         = {Robust {LLM} Alignment via Distributionally Robust Direct Preference Optimization},
  author        = {Xu, Zaiyan and Vemuri, Sushil and Panaganti, Kishan and Kalathil, Dileep and Jain, Rahul and Ramachandran, Deepak},
  booktitle     = {Advances in Neural Information Processing Systems},
  year          = {2025},
  eprint        = {2502.01930},
  archivePrefix = {arXiv},
  primaryClass  = {cs.LG}
}

@inproceedings{panaganti2022robust,
  title         = {Robust Reinforcement Learning using Offline Data},
  author        = {Panaganti, Kishan and Xu, Zaiyan and Kalathil, Dileep and Ghavamzadeh, Mohammad},
  booktitle     = {Advances in Neural Information Processing Systems},
  year          = {2022},
  eprint        = {2208.05129},
  archivePrefix = {arXiv},
  primaryClass  = {cs.LG}
}

@misc{yu2025rfew,
  title         = {Guided Self-Evolving LLMs with Minimal Human Supervision},
  author        = {Wenhao Yu and Zhenwen Liang and Chengsong Huang and Kishan Panaganti and Tianqing Fang and Haitao Mi and Dong Yu},
  year          = {2025},
  eprint        = {2512.02472},
  archivePrefix = {arXiv},
  primaryClass  = {cs.CL},
  url           = {https://arxiv.org/abs/2512.02472}
}
